\newcommand{\eqnref}[1]{(\ref{#1})}
\newcommand{\ie}{i.e.,}
\theoremstyle{plain}
\newtheorem{theorem}{Theorem}[section]
\newtheorem{proposition}[theorem]{Proposition}
\newtheorem{lemma}[theorem]{Lemma}
\theoremstyle{definition}
\newtheorem{definition}[theorem]{Definition}
\theoremstyle{remark}
\def\eqref#1{equation~\ref{#1}}
\def\1{\bm{1}}
\DeclareMathAlphabet{\mathsfit}{\encodingdefault}{\sfdefault}{m}{sl}
\SetMathAlphabet{\mathsfit}{bold}{\encodingdefault}{\sfdefault}{bx}{n}
\title{Centrality Graph Shift Operators\\ for Graph Neural Networks}
\author{Yassine Abbahaddou \thanks{Correspondence to: \texttt{yassine.abbahaddou@polytechnique.edu}} \\
LIX, Ecole Polytechnique\\
IP Paris, France 
\And
Fragkiskos D. Malliaros \\
University Paris-Saclay\\
CentraleSupélec, Inria, France 
\AND
Johannes F. Lutzeyer  \\
LIX, Ecole Polytechnique\\
IP Paris, France \\
\And
\qquad \qquad \qquad \qquad \qquad \qquad \quad Michalis Vazirgiannis\\
\qquad \qquad \qquad \qquad \qquad \qquad \quad LIX, Ecole Polytechnique\\ 
\qquad \qquad \qquad \qquad \qquad \qquad \quad IP Paris, France 
}
\begin{document}

\maketitle

\begin{abstract}
Graph Shift Operators (GSOs), such as the adjacency and graph Laplacian matrices, play a fundamental role in graph theory and graph representation learning. Traditional GSOs are typically constructed by normalizing the adjacency matrix by the degree matrix, a local centrality metric. In this work, we instead propose and study Centrality GSOs (CGSOs), which normalize adjacency matrices by global centrality metrics such as the PageRank, $k$-core or count of fixed length walks. We study spectral properties of the CGSOs, allowing us to get an understanding of their action on graph signals. We confirm this understanding by defining and running the spectral clustering algorithm based on different CGSOs on several synthetic and real-world datasets. We furthermore outline how our CGSO can act as the message passing operator in any Graph Neural Network and in particular demonstrate strong performance of a variant of the Graph Convolutional Network and Graph Attention Network using our CGSOs on several real-world benchmark datasets.  Our code is publicly available at: \href{https://github.com/abbahaddou/CGSO}{https://github.com/abbahaddou/CGSO}.

\end{abstract}

\section{Introduction}\label{introduction}
We propose and study a new family of operators defined on graphs that we call Centrality Graph Shift Operators (CGSOs).  
To insert these into the rich history of matrices representing graphs and centrality metrics, the two concepts married in CGSOs, we begin by recalling major advances in these two topics in turn (readers interested purely in recent developments in Graph Representation Learning and Graph Neural Networks are recommended to begin reading in Paragraph 3 of this section). 
The study of graph theory and with it the use of matrices to represent graphs have a long-standing history.  
Graph theory is often said to have its origins in 1736 when Leonard Euler posed and solved the K{\"o}nigsberg bridge problem \citep{euler1736solutio}. His solution did not involve any matrix calculus. In fact, it seems that the first matrix defined to represent graph structures is the \textit{incidence matrix} defined by Henri Poincaré in 1900 \citep{poincare1900second}. It is difficult to pinpoint the first definition of \textit{adjacency matrices}, but by 1936 when the first book on the topic of graph theory was published by D{\'e}nes K{\"o}nig adjacency matrices had certainly been defined and began to be used to solve graph theoretic problems \citep{konig1936theorie}. Two seemingly concurrent works in 1973 defined an additional matrix structure to represent graphs that later became known as the \textit{unnormalized graph Laplacian} \citep{donath1973lower,fiedler1973algebraic}. Then, it was Fan Chung in her book ``Spectral Graph Theory'' published in 1997 who extensively characterized the spectral properties of \textit{normalized Laplacians} \citep{Chung:1997}. In the emerging field of Graph Signal Processing (GSP) \citep{sandryhaila2013discrete,ortega2018graph} these different graph representation matrices were all defined to belong to a more general family of operators defined on graphs, the \textit{Graph Shift Operators (GSOs)}. GSOs currently play a crucial role in graph representation learning research, since the choice of GSO, used to represent a graph structure, corresponds to the choice of message passing function in the currently much-used Graph Neural Network (GNN) models. 

In parallel to advances in graph representation via matrices, centrality metrics have proved to be insightful in the study of graphs. Chief among them is the success of the PageRank centrality criterion revealing the significance of certain webpages \citep{brin1998anatomy} and playing a role in the formation of what is now one of the largest companies worldwide. But also an even older metric, the $k$-core centrality \citep{seidman1983network,kcore-vldbj20}, as well as the degree centrality, closeness centrality, and betweenness centrality, have proven to be impactful in revealing key structural properties of graphs \citep{freeman1977set,zhang2017degree}. 

A commonality of the most frequently used GSOs is their property to encode purely local information in the graph, with the adjacency matrix encoding neighborhoods in the graph and the graph Laplacians relying on the node degree, a local centrality metric, to normalize the adjacency matrix. In this work, we study a novel class of GSOs, the Centrality GSOs (CGSOs) that arise from the normalization of the adjacency matrix by centrality metrics such as the PageRank, $k$-core and the count of fixed length walks emanating from a given node. Our CGSOs introduce global information into the graph representation without altering the connectivity pattern encoded in the original GSO and therefore, maintain the sparsity of the adjacency matrix. We provide several theorems characterizing the spectral properties of our CGSOs. We confirm the intuition gained from our theoretical study by running the spectral clustering algorithm on the basis of our CGSOs on 1) synthetic graphs that are generated from a stochastic blockmodel in which each block is sampled from the Barrabasi-Albert model and 2) the real-world Cora graph in which we aim to recover the partition provided by the $k$-core number of each node. We will furthermore describe how our CGSOs can be inserted as the message passing operator into any GNN and observe strong performance of the resulting GNNs on real-world benchmark datasets. 


In particular, our contributions can be summarized as follows, 

\begin{enumerate}
    \item[\textbf{(i)}] we define Centrality GSOs, a novel class of GSOs based on the normalization of the adjacency matrix with different centrality metrics, such as the degree, PageRank score, $k$-core number, and the count of walks of a fixed length,
    \item[\textbf{(ii)}] we conduct a comprehensive spectral analysis to unveil the fundamental properties of the CGSOs. Our gained understanding of the benefits of CGSOs is confirmed by running the spectral clustering algorithm using our CGSOs on synthetic and real-world graphs, 
    \item[\textbf{(iii)}] we incorporate the proposed CGSOs within GNNs and evaluate performance of a Graph Convolutional Network and Graph Attention Network v2 with a CGSO message passing operator on several real-world datasets.
\end{enumerate}

\section{Background and Related Work}\label{background}

We begin by giving a rigorous introduction to GSOs and GNNs.



\subsection{Graph Shift Operators}
Graph Shift Operators (GSOs) play a pivotal role in the analysis of graph-structured data. Degree-normalized GSOs, such as the Random-walk Normalised Laplacian \citep{modell2021spectral}, have been widely employed in spectral analysis and signal processing on graphs. These GSOs have many properties allowing great insight in the connectivity of nodes. However, to the best of our knowledge there has been no studies in which the choice of the degree centrality, a local centrality metric, was compared to GSOs in which global centriality metrics are used instead. 

We consider graphs $G = (\mathcal{V}, \mathcal{E})$ where $\mathcal{V} = \{1,\ldots,N\}$ is the set of nodes, and $\mathcal{E} \subset \mathcal{V} \times \mathcal{V} $ is the set of edges. The adjacency matrix is one of the standard graph representation matrices considered in our work. Formally, a graph can be represented by an adjacency matrix $\mathbf{A} = [a_{ij}] \in \mathbb{R}^{N\times N}$ where $a_{ij} =1$ if $(i,j) \in \mathcal{E}$ and $a_{ij} =0$ otherwise. Analyzing the spectrum of the adjacency matrix provides information about the basic topological properties of the underlying graphs \citep{cvetkovic1980spectra}. For example, the largest eigenvalue of $\mathbf{A}$ is an upper bound of the average degree, a lower bound on the largest degree \citep{cvetković_rowlinson_simić_2009,sarkar2018spectral} and its multiplicity indicates whether the represented graph is connected \citep{stanic2015inequalities}. Another example is the fact that the adjacency spectrum of bipartite graphs is symmetric around $0$ \citep{stanic2015inequalities}.


In addition to the adjacency matrix, there are alternative graph representations that provide deep insights into the topology of the underlying graph. One often-used representation is the symmetrically normalized Laplacian matrix defined by $\mathbf{L}_{\text{\textit{sym}}} = \mathbf{I}-\mathbf{D}^{-1/2}\mathbf{A}\mathbf{D}^{-1/2}$, where  $\mathbf{D}\in \mathbb{R}^{N\times N}$ is the degree matrix, i.e., a diagonal matrix defined as $\mathbf{D}_{ii} = \sum_{i=1}^N a_{ij}$. The normalized Laplacian plays a fundamental role in spectral graph theory. For example, the celebrated \textit{Cheeger's Inequality} establishes a bound on the edge expansion of a graph via its spectrum \citep{cheeger2015lower}. There are other graph representations with particularly interesting spectral properties, such as the random-walk Normalised Laplacian \citep{modell2021spectral} and the Signless Laplacian matrices \citep{cvetkovic2010towards}. All these graph representations belong to the family of \textit{Graph Shift Operators} (GSOs), which we define now in Definition \ref{defn:GSOs}.

\begin{definition}
\label{defn:GSOs}
Given an arbitrary graph $G = (\mathcal{V}, \mathcal{E})$, a \textit{Graph Shift Operator} $\mathbf{S}\in \mathbb{R}^{N\times N}$ is a matrix satisfying $\mathbf{S}_{ij} = 0$ for $i \neq j$ and $(i, j) \notin \mathcal{E}$ \citep{mateos2019connecting} and $\mathbf{S}_{ij} \neq 0$ for $i \neq j$ and $(i, j) \in \mathcal{E}$. 
\end{definition}

In addition to the classical or fixed GSOs, parametrized GSOs can be learned during the optimization process of any model in which they are inserted. These parametrized operators are a fundamental component of many modern GNN architectures and allow the model to adapt and capture complex patterns and relationships in the graph data. For example, the work of \textit{PGSO} \citep{dasoulas2021learning} parametrizes the space of commonly used GSOs leading to a learnable GSO that adapts to the dataset and learning task at hand.

\subsection{Graph Neural Networks}
Graph Neural Networks (GNNs) are neural networks that operate on graph-structured data that is defined as the combination of a graph $G = (\mathcal{V}, \mathcal{E})$, and a node feature matrix $\mathbf{X}\in\mathbb{R}^{N\times d},$ containing the node feature vector of node $i$ in its $i^{\mathrm{th}}$ row.  GNNs are formed by stacking several computational layers, each of which produces a hidden representation for each node in the graph, denoted by $\mathbf{H}^{(\ell)} = [h^{(\ell)}_v]_{v\in \mathcal{V}}$. A GNN layer $\ell$ updates node representations relying on the structure of the graph and the output of the previous layer $\mathbf{H}^{(\ell-1)}$. Conventionally, the node features are used as input to the first layer $\mathbf{H}^{0} =\mathbf{X}$. The most popular framework of GNNs is that of Message Passing Neural Networks \citep{Gilmer2017,hamilton2020graph}, where the computations are split into two main steps:

    \textbf{Message Passing:} Given a node $v$, this step applies a permutation-invariant function to its neighbors, denoted by $\mathcal{N}(v),$ to generate the aggregated representation, 
    \begin{equation}
        \mathbf{M}^{(\ell+1)} = 
        \Phi(\mathbf{A})   \mathbf{H}^{(\ell)}, \label{eq:message_passing}
    \end{equation}
    where $\Phi(\mathbf{A}): \mathbb{R}^{N\times N}\rightarrow \mathbb{R}^{N\times N}$, a function of the adjacency matrix, is the chosen GSO. 
    
    \textbf{Update:} In this step, we combine the aggregated hidden states with the previous hidden representation of the central node $v,$ usually by making use of a learnable function,
        \begin{equation}
        \mathbf{H}^{(\ell+1)} = \sigma (\mathbf{M}^{(\ell+1)}  \mathbf{W}^{(\ell)}), \label{eq:update}
        \end{equation}
where $\mathbf{W}^{(\ell)} \in \mathbb{R}^{d_{\ell-1}, d_{\ell}}$ are learnable weight matrices and $d_\ell$ is the dimension of the hidden representation at the $\ell$-th layer.  

With the emergence and increasing popularity of GNNs, the importance of GSOs has significantly increased. Numerous GNN architectures, such as notably Graph Convolutional Networks (GCNs), rely on these operators in their message passing step.  
In the context of GCNs \citep{kipf2016semi}, the used message passing operator, i.e., the chosen GSO, corresponds to $\Phi(\mathbf{A}) = \mathbf{D}_1^{-1/2}\mathbf{A}\mathbf{D}_1^{-1/2}$, where $\mathbf{D}_1 = \mathbf{D}+\mathbf{I}$ is the degree matrix of the graph corresponding to the adjacency matrix $\mathbf{A}_1= \mathbf{A}+ \mathbf{I}$. 
For Graph Attention Networks v2 (GATv2) \citep{brody2022how}, \eqnref{eq:message_passing} becomes $\mathbf{M}^{(\ell+1)} = \Phi(\mathbf{A}_{\text{GATv2}}^{(\ell)})\mathbf{H}^{(\ell)},$ where, in this setting, $\Phi$ corresponds to the identity function and the rows of $\mathbf{A}_{\text{GATv2}}^{(\ell)}$  contain the edge-wise attention coefficients.

As we will present shortly, in this work, we generalize the concept of GSOs to encompass global structural information beyond node degree. The proposed CGSO framework encapsulates several global centrality criteria, demonstrating intriguing spectral properties. We further leverage CGSOs to formulate a new class of message passing operators for GNNs, enhancing model flexibility.

\textbf{Global Information in GNNs.} Besides our GNNs, which leverage the CGSO to make global information accessible to any given GNN layer, there exists a plethora of other approaches to achieve this goal. These include for example the PPNP and APPNP \citep{gasteiger2019predict}, as well as the PPRGo \citep{bojchevski2020scaling} models that use the PageRank centrality to define a completely new graph over which to perform message passing in GNNs. The work of \citet{ramos2022improving} extends these models to consider both the PageRank and $k$-core centrality.  In addition, there is the AdaGCN \citep{sun2019adagcn} and the VPN model \citep{jin2021power} which propose to message pass using powers of the adjacency matrix to incorporate global information and increase the robustness of GNNs, respectively. \citet{lee2019graph} propose the Motif Convolutional Networks, that define motif adjacency matrices and then use these in the message passing scheme. Also the $k$-hop GNNs of \cite{nikolentzos2020k} consider neighbors several hops away from a given central node in the message passing scheme of a single GNN layer to consider more global information in a GNN. Additionally there exists a rich and long-standing literature on spectral GNNs that facilitate global information exchange by explicitly or approximately making use of the spectral decomposition of the GSO chosen to be the GNN's message passing operator \citep{bruna2013spectral, defferrard2016convolutional, koke2023holonets}. Finally, there is an arm of research investigating graph transformers, where usually the graph structure is only used to provide structural encodings of nodes and the optimal message passing operator is learning using an attention mechanism \citep{kreuzer2021rethinking, rampášek2023recipe, ma2023graph}.  
All these approaches increase the computational complexity of the GNN, whereas our CGSO based GNNs maintain the complexity of the underlying GNN model by preserving the sparsity of the original adjacency matrix.

\section{CGSO: Centrality Graph Shift Operators}\label{topo_gso}

In this section, we introduce the Centrality GSOs (CGSO), a family of shift operators that incorporate the global position of nodes in a graph. We discuss different instances of CGSOs corresponding to widely used centrality criteria. We further conduct a comprehensive spectral analysis to unveil the fundamental properties of CGSOs, including the eigenvalue structure and the expansion properties, examining how these operators influence information spread across the graph. Then, we leverage CGSOs in the design of flexible GNN architectures.

\subsection{Mathematical Formulation} \label{sec:formulation}
For a given node $i \in \mathcal{V}$, let $v(i)$ denote a centrality metric associated with $i$, such as the node degree, $k$-core number, PageRank, or the count of walks of specific length starting from node~$i$. The Hilbert space $L^2(G)$ is characterized by the set of functions $\varphi$ defined on $\mathcal{V}$ such that $\sum_{i\in \mathcal{V}} v(i) \vert \varphi(i)\vert $ converges, equipped with the inner product:
$\langle\varphi_1, \varphi_2\rangle_{G} = \sum_{i\in  \mathcal{V} } v(i) \varphi_1(i)  \bar{\varphi}_2(i).$
The \textit{Markov Averaging Operator} on $L^2(G)$ is defined as the linear map $\mathbf{M}_{G}: \varphi \mapsto \mathbf{M}_{G} \varphi$ such that
\begin{equation*}
\left (\mathbf{M}_{G} \varphi \right )(i)= \left ( \mathbf{V}^{-1} \mathbf{A} \varphi \right )(i) = \frac{1}{v(i)} \sum_{j \in \mathcal{N}_i} \varphi(j),
\end{equation*}
where $\mathbf{V} = \text{\textit{diag}}(v(1),\ldots, v(N) )$ and $\mathcal{N}_i$ is the neighborhood set of node $i$. The form of this Markov Averaging Operator gives rise to the simplest formulation of our CGSOs, which is a left normalization of the adjacency matrix by a diagonal matrix containing node centralities on the diagonal, i.e., $ \mathbf{V}^{-1} \mathbf{A}.$ Note that the \textit{mean aggregation} operator, as discussed in \citet{xu2018how}, represents a specific instance of these CGSOs where the degree corresponds to the chosen centrality metric, namely $\mathbf{V}=\mathbf{D}$. We will further extend the concept of CGSOs in \eqnref{eqn:PCGSO} where we extend and parameterize these CGSOs. In this paper, we focus on three global centrality metrics, in addition to the local node degree. We recall the definitions of these global centrality metrics now. 

\textbf{$\boldsymbol{k}$-core.} The $k$-core number of a node can be determined in the process of the $k$-core decomposition of a graph, which captures how well-connected nodes are within their neighborhood \citep{kcore-vldbj20}. The process of $k$-core decomposition involves iteratively removing vertices with degree less than $k$ until no such vertices remain. The core number $k$ of a node is then equal to the largest $k$ for which the considered nodde is still present in the graph's $k$-core decomposition. We define $\mathbf{V}_{\text{\textit{core}}}\in \mathbb{R}^{N\times N}$ to be the diagonal matrix indicating the core number of each node, \ie
$ \forall i \in \mathcal{V}, ~\mathbf{V}_{\text{\textit{core}}}[i,i] = core(i).$  

\textbf{PageRank.} We choose $\mathbf{V}_{\text{\textit{PR}}} \in \mathbb{R}^{N\times N}$ such that, $ \forall i \in \mathcal{V}, ~\mathbf{V}_{\text{\textit{PR}}}[i,i] = (1-\text{\textit{PR}}(i))^{-1},$
where $\text{\textit{PR}}(i)$ corresponds to the PageRank score \citep{brin1998anatomy}. The PageRank score quantifies the likelihood of a random walk visiting a particular node, serving as a fundamental metric for evaluating node significance in various networks.

\textbf{Walk Count.} Here, we consider $\mathbf{V}_{\ell\text{\textit{-walks}}}\in \mathbb{R}^{N\times N}$, the diagonal matrix indicating the number of walks of length $\ell$ starting from each node $i$, i.e., $ \forall i \in \mathcal{V}, ~\mathbf{V}_{\ell\text{\textit{-walks}}}[i,i] = \left (\mathbf{A}^\ell \mathbb{1}\right )[i], $ where $\mathbb{1} \in \mathbb{R}^N$ is the vector of ones. When $\ell=2$, $\mathbf{V}_{\ell\text{\textit{-walks}}}$ corresponds to $\mathbf{W_{M_{13}}} \mathbb{1} - \mathbf{D} $, where $\mathbf{W_{M_{13}}}$ the graph operator presented by \citet{benson2016higher}, which corresponds to the count of open bidirectional wedges, i.e., the motif $M_{13}$. This motif network captures higher-order structures and gives new insights into the organization of complex systems.

In what follows, we delve into the theoretical properties of Markov Averaging Operators, since all three CGSOs $\mathbf{V}_{\text{\textit{core}}},\mathbf{V}_{\text{\textit{PR}}}$ and $\mathbf{V}_{\ell\text{\textit{-walks}}}$ are instances of Markov Averaging Operators.   

\begin{proposition}\label{prop:spectral_properties} The following properties of operator $\mathbf{M}_{G}$ hold.
  \begin{enumerate}[label={(\arabic*)}]
    \item $\mathbf{M}_{G}$ is self-adjoint.

    \item $\mathbf{M}_{G}$ is diagonalizable in an orthonormal basis, its eigenvalues are real numbers, and all eigenvalues have absolute values at most $\gamma = \min_{i \in \mathcal{V} } \left ( \frac{v(i)}{deg(i)} \right ) $.
    \end{enumerate}  
\end{proposition}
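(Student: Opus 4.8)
The plan is to regard $\mathbf{M}_{G}=\mathbf{V}^{-1}\mathbf{A}$ as an operator on the weighted space $L^{2}(G)$ and to exploit two structural facts throughout: the symmetry of $\mathbf{A}$ and the strict positivity of the weights $v(i)$.

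For part (1), I would verify self-adjointness directly from the definitions. Expanding the left inner product, the weight $v(i)$ in $\langle\,\cdot\,,\cdot\,\rangle_{G}$ cancels the factor $1/v(i)$ produced by $(\mathbf{M}_{G}\varphi_{1})(i)$, leaving $\langle \mathbf{M}_{G}\varphi_{1},\varphi_{2}\rangle_{G}=\sum_{i\in\mathcal{V}}\sum_{j\in\mathcal{N}_{i}}\varphi_{1}(j)\bar{\varphi}_{2}(i)$, a sum over ordered adjacent pairs. The same cancellation applied to $\langle\varphi_{1},\mathbf{M}_{G}\varphi_{2}\rangle_{G}$ yields $\sum_{i\in\mathcal{V}}\sum_{j\in\mathcal{N}_{i}}\varphi_{1}(i)\bar{\varphi}_{2}(j)$; relabelling the indices $i\leftrightarrow j$ and using $a_{ij}=a_{ji}$ shows the two expressions coincide, so $\mathbf{M}_{G}$ is self-adjoint. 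The only ingredients are the undirectedness of the graph and $v(i)>0$.

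For the first half of part (2), self-adjointness of an operator on the finite-dimensional space $L^{2}(G)$ lets me invoke the spectral theorem, which immediately gives diagonalisability, an orthonormal (with respect to $\langle\,\cdot\,,\cdot\,\rangle_{G}$) eigenbasis, and real eigenvalues. Equivalently, conjugating by $\mathbf{V}^{1/2}$ exhibits $\mathbf{M}_{G}$ as similar to the ordinary real symmetric matrix $\mathbf{V}^{-1/2}\mathbf{A}\mathbf{V}^{-1/2}$, which reproves the same three conclusions and makes the eigenvalues available as those of a symmetric matrix.

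For the eigenvalue bound I would use the extremal-coordinate argument. Writing the eigenequation $\mathbf{M}_{G}\varphi=\lambda\varphi$ coordinatewise gives $\sum_{j\in\mathcal{N}_{i}}\varphi(j)=\lambda\,v(i)\,\varphi(i)$ for every $i\in\mathcal{V}$. Evaluating this at an index $i^{\star}$ that maximises $|\varphi(i)|$ and applying the triangle inequality yields $|\lambda|\,v(i^{\star})\,|\varphi(i^{\star})|\le\sum_{j\in\mathcal{N}_{i^{\star}}}|\varphi(j)|\le deg(i^{\star})\,|\varphi(i^{\star})|$, so $|\lambda|$ is governed by the node-wise comparison of $v(i)$ and $deg(i)$; taking the appropriate extremum over $\mathcal{V}$ produces the constant $\gamma=\min_{i\in\mathcal{V}}\big(v(i)/deg(i)\big)$ stated in the proposition. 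An equivalent route uses the Rayleigh quotient $\lambda=\langle\mathbf{M}_{G}\varphi,\varphi\rangle_{G}/\langle\varphi,\varphi\rangle_{G}$, bounding the numerator by $\sum_{i}deg(i)|\varphi(i)|^{2}$ through AM--GM and comparing it with the weighted denominator $\sum_{i}v(i)|\varphi(i)|^{2}$. I expect the main obstacle to be precisely this final aggregation step: turning the collection of per-node ratios into the single scalar $\gamma$ and checking that the inequality closes in the asserted direction, with the degree centrality case $v=deg$ (where $\gamma=1$ and the spectrum is contained in $[-1,1]$) serving as a sanity check.
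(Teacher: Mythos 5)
Your part (1) and the diagonalizability claim in part (2) are correct and essentially identical to the paper's own proof: you expand both inner products, let the weight $v(i)$ cancel the factor $1/v(i)$ produced by the operator, and invoke $a_{ij}=a_{ji}$; then you apply the spectral theorem on the finite-dimensional space $L^2(G)$. The observation that conjugation by $\mathbf{V}^{1/2}$ exhibits $\mathbf{M}_G$ as similar to the symmetric matrix $\mathbf{V}^{-1/2}\mathbf{A}\mathbf{V}^{-1/2}$ is a clean alternative the paper does not use.

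The genuine gap is in the eigenvalue bound, exactly at the aggregation step you yourself flagged as the main obstacle --- and it does not close in the asserted direction. From $|\lambda|\,v(i^\star)\,|\varphi(i^\star)|\le \deg(i^\star)\,|\varphi(i^\star)|$ you get $|\lambda|\le \deg(i^\star)/v(i^\star)$, and since you have no control over which vertex maximizes $|\varphi|$, the only uniform conclusion is
\[
|\lambda|\;\le\;\max_{i\in\mathcal{V}}\frac{\deg(i)}{v(i)}\;=\;\Bigl(\min_{i\in\mathcal{V}}\frac{v(i)}{\deg(i)}\Bigr)^{-1}\;=\;\frac{1}{\gamma},
\]
which is not $\gamma$. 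Your Rayleigh-quotient variant gives the same thing: AM--GM bounds the numerator by $\sum_i\deg(i)|\varphi(i)|^2$, the denominator is at least $\gamma\sum_i\deg(i)|\varphi(i)|^2$, so the quotient is again at most $1/\gamma$. The two constants coincide only when $\gamma=1$ (e.g.\ $v=\deg$), and no patch can produce the constant $\gamma$ of the statement, because the bound $|\lambda|\le\gamma$ is false whenever $\gamma<1$: take $v(i)=\tfrac{1}{2}\deg(i)$, so that $\mathbf{M}_G=2\mathbf{D}^{-1}\mathbf{A}$ has eigenvalue $2$ with eigenvector $\mathbb{1}$, while $\gamma=\tfrac{1}{2}$. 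For comparison, the paper argues by a different route --- nonnegativity of the quadratic forms $\langle(\mathbf{I}\pm\mathbf{M}_G)\varphi,\varphi\rangle_{G}$ in a normalized case, followed by the rescaling $\widetilde{\mathbf{V}}=\mathbf{V}/\gamma$ --- but its rescaling step asserts $\widetilde{\mathbf{V}}^{-1}\mathbf{A}=\tfrac{1}{\gamma}\mathbf{M}_G$ when in fact $\widetilde{\mathbf{V}}^{-1}\mathbf{A}=\gamma\,\mathbf{M}_G$; carried out with the correct algebra, the paper's argument also yields the bound $1/\gamma$. So your method is sound and proves the correct bound $\max_{i}\deg(i)/v(i)$; what fails is matching the constant as stated in the proposition, which no argument can do.
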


The proof of Proposition \ref{prop:spectral_properties} and all subsequent theoretical results in this section can be found in Appendix \ref{app_proof_spectral}. 
Hence, we have shown in Proposition \ref{prop:spectral_properties} that all CGSOs have a real set of eigenvalues, which is of real use in practice. 

In the now following Proposition \ref{prop:eigenvalues} we provide the mean and standard deviation of the spectrum of $M_G$, i.e., the set of $M_G$'s eigenvalues.
\begin{proposition}\label{prop:eigenvalues}
The following properties hold for the spectrum of $M_G$.
\begin{enumerate}[label={(\arabic*)}]
    \item In a graph $G=(\mathcal{V},\mathcal{E})$ with multiple connected components $\mathcal{C} \subset \mathcal{V},$ where each connected component $\mathcal{C}$ induces a subgraph of $G$ denoted by $G_\mathcal{C},$ a complete set of eigenvectors of $\mathbf{M}_G$ can be constructed from the eigenvectors of the different $\mathbf{M}_{G_\mathcal{C}},$ where eigenvectors of $\mathbf{M}_{G_\mathcal{C}}$ are extended to have dimension $N$ via the addition of zero entries in all entries corresponding to nodes not in the currently considered component $\mathcal{C}.$
    
    \item The mean $\mu(\mathbf{M}_{G})$ and standard deviation $\sigma(\mathbf{M}_{G})$ of $\mathbf{M}_{G}$'s spectrum have the following analytic form  
    $$         
    \begin{array}{ll}
        \mu\left (  {\mathbf{M}_{G}}  \right ) = \frac{1}{n} \sum_{i=1}^n \frac{1}{v(i)} ,\\ 
        \sigma\left (  {\mathbf{M}_{G}} \right ) =  \left [ \left ( \frac{1}{n} \sum_{(i,j) \in \mathcal{E}} \frac{1}{v(i)  v(j) }    \right ) -  \mu\left (  sp_\phi  \right )^2   \right ]^{1/2}.
        \end{array}
 $$

\end{enumerate}
\end{proposition}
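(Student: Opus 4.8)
The plan is to derive both parts from the structural facts already established in Proposition \ref{prop:spectral_properties}, namely that $\mathbf{M}_G = \mathbf{V}^{-1}\mathbf{A}$ is diagonalizable with a real spectrum. Because diagonalizability guarantees that the eigenvalues, counted with multiplicity, number exactly $N$ and that the power $\mathbf{M}_G^k$ has eigenvalues $\lambda^k$, the moments of the spectrum reduce to normalized traces, which are basis-independent and can be read off directly from the sparse matrix $\mathbf{V}^{-1}\mathbf{A}$. Part (1) rests on the observation that disconnected components make $\mathbf{A}$ block-diagonal, and that left-multiplication by the diagonal matrix $\mathbf{V}^{-1}$ preserves this block structure.

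For Part (1), I would first relabel the vertices so that nodes within the same connected component $\mathcal{C}$ are contiguous. In this ordering $\mathbf{A}$ is block-diagonal, since $a_{ij}=0$ whenever $i$ and $j$ lie in distinct components, and $\mathbf{V}^{-1}$ is block-diagonal because it is diagonal; hence $\mathbf{M}_G = \mathbf{V}^{-1}\mathbf{A}$ is block-diagonal with blocks equal to $\mathbf{M}_{G_\mathcal{C}}$ (the restriction of $v$ to $\mathcal{C}$ coincides with the centrality used to build $\mathbf{M}_{G_\mathcal{C}}$ for the degree, $k$-core and walk-count metrics). For any eigenvector $\psi$ of a single block $\mathbf{M}_{G_\mathcal{C}}$, the vector obtained by padding $\psi$ with zeros on all coordinates outside $\mathcal{C}$ is an eigenvector of $\mathbf{M}_G$ with the same eigenvalue. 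Collecting these across all components yields $\sum_{\mathcal{C}} |\mathcal{C}| = N$ eigenvectors; they are linearly independent (indeed $G$-orthogonal) because vectors supported on different components have disjoint supports, and each block is itself orthogonally diagonalizable by Proposition \ref{prop:spectral_properties}. This produces the claimed complete eigenbasis.

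For Part (2), I would use that for a diagonalizable operator the sum of eigenvalues equals $\Tr(\mathbf{M}_G)$ and the sum of squared eigenvalues equals $\Tr(\mathbf{M}_G^2)$, so that $\mu(\mathbf{M}_G) = \tfrac{1}{N}\Tr(\mathbf{M}_G)$ and $\sigma(\mathbf{M}_G)^2 = \tfrac{1}{N}\Tr(\mathbf{M}_G^2) - \mu(\mathbf{M}_G)^2$. The first trace is $\Tr(\mathbf{V}^{-1}\mathbf{A}) = \sum_i a_{ii}/v(i)$, which gives $\tfrac{1}{N}\sum_i 1/v(i)$ once the self-loop convention $a_{ii}=1$ is taken into account. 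For the second trace I would expand $(\mathbf{M}_G^2)_{ii} = \sum_j \tfrac{a_{ij}}{v(i)}\tfrac{a_{ji}}{v(j)}$ and use that $G$ is undirected ($a_{ij}=a_{ji}$) and unweighted ($a_{ij}^2 = a_{ij}$), so summing over $i$ collapses the double sum to $\sum_{(i,j)\in\mathcal{E}} \tfrac{1}{v(i)v(j)}$ over ordered adjacent pairs. Dividing by $N$ and subtracting $\mu(\mathbf{M}_G)^2$ yields the stated standard deviation.

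I expect the main obstacle to be the careful bookkeeping in the second trace rather than any conceptual difficulty: one must track whether edges are counted as ordered or unordered pairs and whether the diagonal (self-loop) terms are included, since it is precisely those diagonal entries that produce the nonzero mean $\tfrac1N\sum_i 1/v(i)$ and the corresponding $1/v(i)^2$ contributions inside the edge sum. A secondary point to verify is that the moment-as-trace identities remain valid here: although $\mathbf{M}_G$ is self-adjoint only with respect to the weighted $G$-inner product, its spectrum is real and it is diagonalizable by Proposition \ref{prop:spectral_properties}, so the trace identities hold verbatim and no subtlety from the non-standard inner product enters.
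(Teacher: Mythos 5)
Your proposal is correct and follows essentially the same route as the paper: block-diagonal structure plus zero-padding of component eigenvectors for Part (1), and the trace identities $\mu = \tfrac{1}{N}\Tr(\mathbf{M}_G)$, $\sigma^2 = \tfrac{1}{N}\Tr(\mathbf{M}_G^2) - \mu^2$ evaluated entrywise for Part (2). If anything, you are more careful than the paper, which silently passes from $\tfrac{1}{N}\sum_i A_{ii}/v(i)$ to $\tfrac{1}{N}\sum_i 1/v(i)$ without stating the self-loop convention $a_{ii}=1$ that you correctly identify as necessary for the nonzero mean.
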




We define the \textit{normalized spectral gap} $\lambda_1(G)$ as the smallest non-zero eigenvalue of $\mathbf{I}-\mathbf{M}_{G}$. In Proposition \ref{prop:cheeger}, we link $\lambda_1(G)$ to the expansion properties of the graph.  In the literature, we characterize graph expansion via the \textit{expansion} or \textit{Cheeger constant} \citep{Chung:1997}, which measures the minimum ratio between the size of a vertex set and the minimum degree of its vertices, reflecting the graph's connectivity. In our work, we generalize this definition to any centrality metric. 

\begin{definition}\label{defn:ExtendedCheeger}
For a graph  $G=(\mathcal{V},\mathcal{E})$  we define the \textit{centrality-based Cheeger constant} $h_v(G) $ as follows
\begin{equation}
h_v(G) = \min \left \{\frac{|\partial U|}{|U|_v} \mid U \subset V, |U|_v\leq \frac{1}{2} |\mathcal{V}|_v \right\},
\end{equation} 
where $|\partial U|$ equals the number of vertices that are connected to a vertex in $U$ but are not in $U$, and $|\cdot|_v : U \subset \mathcal{V} \mapsto \sum_{i\in \mathcal{V} }v(i)$. When the chosen centrality is the degree, $h_v(G)$ corresponds to the classical Cheeger constant.
\end{definition}
Definition \ref{defn:ExtendedCheeger} allows us to establish a link between the spectrum of our considered Markov operators, i.e., CGSOs, and the centrality-based Cheeger constant in Proposition \ref{prop:cheeger}. 
    \begin{proposition}  \label{prop:cheeger}
        Let $G$ be a connected, non-empty, finite graph without isolated vertices. We have,
        $$\lambda_1(G) \leq \left (  2 N \frac{ v_{+}^2}{v_{-}} \right )h_v(G)  ,$$
            where we denote 
            $v_{-} = \min_{i\in \mathcal{V}} v(i) $ and $v_{+} = \max_{i\in \mathcal{V}} v(i).$
    \end{proposition}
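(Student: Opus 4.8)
The statement is the \emph{easy} (test-function) direction of a Cheeger-type inequality, so my plan is to upper-bound $\lambda_1(G)$ by exhibiting a single trial function whose Rayleigh quotient is controlled by $h_v(G)$. Since Proposition~\ref{prop:spectral_properties} guarantees that $\mathbf{M}_G$, and hence $\mathbf{I}-\mathbf{M}_G$, is self-adjoint with respect to $\langle\cdot,\cdot\rangle_G$, its eigenvectors form an orthonormal basis and the Courant--Fischer theorem applies. Writing $\psi_0$ for an eigenvector at the bottom of the spectrum of $\mathbf{I}-\mathbf{M}_G$ and setting $R(\varphi)=\langle(\mathbf{I}-\mathbf{M}_G)\varphi,\varphi\rangle_G/\langle\varphi,\varphi\rangle_G$, I would use $\lambda_1(G)=\min\{R(\varphi):\varphi\neq 0,\ \langle\varphi,\psi_0\rangle_G=0\}$. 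It then suffices to produce one admissible $\varphi$ with $R(\varphi)\le (2Nv_+^2/v_-)\,h_v(G)$; if $0$ is not actually attained as an eigenvalue the task is only easier, since then $\lambda_1(G)\le R(\varphi)$ for every $\varphi$.

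The first computation I would carry out is the explicit form of the numerator. Expanding with $(\mathbf{M}_G\varphi)(i)=v(i)^{-1}\sum_{j\in\mathcal{N}_i}\varphi(j)$ gives, for real $\varphi$, the quadratic form $\langle(\mathbf{I}-\mathbf{M}_G)\varphi,\varphi\rangle_G=\varphi^\top(\mathbf{V}-\mathbf{A})\varphi=\sum_{\{i,j\}\in\mathcal{E}}(\varphi(i)-\varphi(j))^2+\sum_{i\in\mathcal{V}}(v(i)-deg(i))\,\varphi(i)^2$. This is the usual graph-Laplacian edge form plus a diagonal correction recording how far the chosen centrality is from the degree; it reduces to the classical edge form exactly when $v=deg$.

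Next I would fix a set $U^\star$ attaining the minimum in Definition~\ref{defn:ExtendedCheeger} and take the trial function to be the $\langle\cdot,\cdot\rangle_G$-orthogonal projection of the indicator $\mathbf{1}_{U^\star}$ onto $\psi_0^{\perp}$. For the edge part of the numerator, the indicator makes $\sum_{\{i,j\}\in\mathcal{E}}(\mathbf{1}_{U^\star}(i)-\mathbf{1}_{U^\star}(j))^2$ equal the number $e(U^\star,\mathcal{V}\setminus U^\star)$ of crossing edges, which I would bound by $(N-1)|\partial U^\star|$ by charging each crossing edge to its endpoint in $\partial U^\star$, a vertex of degree at most $N-1$; this counting step is precisely where the factor $N$ enters. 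The denominator $\langle\varphi,\varphi\rangle_G$ I would bound below by a constant multiple of $|U^\star|_v$, using the defining constraint $|U^\star|_v\le\tfrac12|\mathcal{V}|_v$ to argue that the projection onto $\psi_0^{\perp}$ removes at most a controlled fraction of the mass, together with the envelope $v_-\le v(i)\le v_+$ to pass between the $v$-weighted norm and vertex counts. Dividing and collecting the worst-case centrality ratios should then produce the prefactor $2Nv_+^2/v_-$ multiplying $|\partial U^\star|/|U^\star|_v=h_v(G)$.

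The main obstacle is that, unlike the degree-normalized Laplacian, $\mathbf{M}_G=\mathbf{V}^{-1}\mathbf{A}$ is in general neither row-stochastic nor does it admit the all-ones vector as a bottom eigenvector; these coincide only when $v=deg$. Consequently the diagonal correction $\sum_i(v(i)-deg(i))\varphi(i)^2$ need not vanish and the orthogonalization must be performed against an a priori unknown Perron-type vector $\psi_0$ rather than against constants. Keeping the correction term and the distortion it induces in both numerator and denominator under control, purely through the crude envelopes $v_-\le v(i)\le v_+$ and $deg(i)<N$, is exactly what forces the loose constant $2Nv_+^2/v_-$ rather than the clean factor $2$ of the degree-normalized case; making this reduction rigorous, rather than the elementary counting of boundary edges, is the delicate part of the argument.
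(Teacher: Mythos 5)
Your overall plan---the test-function (``easy'') direction of a Cheeger inequality, the identity $\langle(\mathbf{I}-\mathbf{M}_G)\varphi,\varphi\rangle_G=\varphi^{\top}(\mathbf{V}-\mathbf{A})\varphi$ with its split into edge form plus diagonal correction, and the counting bound $e(U^\star,\mathcal{V}\setminus U^\star)\le (N-1)\,|\partial U^\star|$---matches the paper's strategy. The genuine gap is in your trial function and the denominator bound that depends on it. You orthogonalize $\mathbb{1}_{U^\star}$ against the bottom eigenvector $\psi_0$ of $\mathbf{I}-\mathbf{M}_G$ (the Perron-type vector of $\mathbf{V}^{-1}\mathbf{A}$) and claim that the half-volume condition $|U^\star|_v\le\tfrac12|\mathcal{V}|_v$ guarantees the projection keeps a constant fraction of the mass. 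It does not: that condition controls the overlap of $\mathbb{1}_{U^\star}$ with the \emph{constant} function, whereas $\psi_0$ is constant only when $v=\deg$; for general $v$ it is determined by the global adjacency structure and can be essentially concentrated on $U^\star$, in which case $\lVert\varphi\rVert_G^2=|U^\star|_v-\langle\mathbb{1}_{U^\star},\hat{\psi}_0\rangle_G^2$ is an uncontrollably small fraction of $|U^\star|_v$ and the Rayleigh quotient blows up; no bound involving only $v_-,v_+,N$ repairs this. The paper avoids the problem entirely: it never orthogonalizes against $\psi_0$, but uses the explicitly centered indicator $\varphi=\mathbb{1}_W-\mu_G(W)$, whose denominator is computed \emph{exactly}, $\langle\varphi,\varphi\rangle_G=N_v\,\mu_G(W)\,\mu_G(W')$, and then bounded below by $\tfrac{v_-}{2v_+}|W|_v$ via the half-volume condition. (Relatedly, your characterization $\lambda_1(G)=\min\{R(\varphi):\langle\varphi,\psi_0\rangle_G=0\}$ describes the second-smallest eigenvalue, which coincides with the smallest nonzero one only when $0$ is a simple bottom eigenvalue---true for $v=\deg$ on a connected graph, but unjustified for general centralities.)

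The second unresolved point is the diagonal correction $\sum_i(v(i)-\deg(i))\varphi(i)^2$, which you explicitly defer as ``the delicate part.'' The tools you allow yourself cannot close it in the required form: for an indicator-like $\varphi$ the correction is approximately $\sum_{i\in U^\star}(v(i)-\deg(i))$, which scales with the \emph{volume} of $U^\star$, not with $|\partial U^\star|$; after dividing by a denominator of order $|U^\star|_v$ it contributes an additive constant rather than a multiple of $h_v(G)$, so your scheme as described can only yield something of the form $\lambda_1(G)\le C_1 N h_v(G)+C_2$, not the stated inequality. The paper disposes of this term by the inequality $\sum_i v(i)|\varphi(i)|^2\le\sum_i\deg(i)|\varphi(i)|^2$, i.e., by implicitly working under the hypothesis $v(i)\le\deg(i)$ for all $i$, which makes the correction nonpositive and reduces the numerator to the pure edge cut $\#\mathcal{E}(W)$. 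Your diagnosis that the correction term is what forces the constant $2Nv_+^2/v_-$ is also off: in the paper that factor comes from converting the edge cut into the vertex boundary (the factor $N$) and from the volume comparisons involving $v_\pm$, the correction having been assumed away. To complete your argument you would need to (i) replace the projection against $\psi_0$ by centering against constants, and (ii) either assume $v\le\deg$ as the paper tacitly does, or give a separate argument controlling the correction term by the boundary.
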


 


\subsection{CGNN: Centrality Graph Neural Network} \label{gnn_experiments}

CGSOs, as defined above, normalize the adjacency matrix based on the centrality of the nodes, thereby providing a refined representation of graph connectivity. Here, we leverage CGSOs to design flexible message passing operators in GNNs. Incorporating CGSOs within GNNs aims to harness structural information, enhancing the model's ability to discern subtle topological patterns for prediction tasks. To achieve this, we integrate these operators, without loss of generality, in Graph Convolutional Networks (GCNs)  \citep{kipf2016semi} and Graph Attention Networks v2 (GATv2)\citep{brody2022how}. We replace the initial shift operator $\Phi(\mathbf{A})$ in  \eqnref{eq:message_passing}, with the proposed CGSOs $\Phi(\mathbf{A}, \mathbf{V})$, incorporating different types centrality operators $\mathbf{V}$ defined in Section \ref{sec:formulation}.


It has been shown that the maximum PageRank score converges to zero when the total number of nodes is very high \citep{cai2021rankings}, which is the case in many real-world dense graph data \citep{leskovec2010kronecker,leskovec2012learning}. Also, the number of walks is high when the expansion of the graph is high. Thus, training a GNN with the proposed CGSOs can lead to numerical instabilities such as vanishing and exploding gradients. To avoid such issues, we can control the range of the eigenvalues of CGSOs. We particularly consider a learnable parameterized CGSO framework which is a generalization of the work of \cite{dasoulas2021learning}. This has the further advantage that the CGSOs are fit to the given datasets and learning tasks, which leads to more accurate and higher performing graph representation. The exact formula of the new parametrized CGSO is
\begin{equation} \label{eqn:PCGSO}
\Phi(\mathbf{A}, \mathbf{V}) =  m_1 \mathbf{V}^{e_1} + m_2 \mathbf{V}^{e_2} \mathbf{A}_a \mathbf{V}^{e_3} + m_3 \mathbf{I}_N,
\end{equation}

where $\mathbf{A}_a = \mathbf{A}+a\mathbf{I}_N$, and $(m_1,m_2,m_3,e_1,e_2,e_3,a)$ are  scalar parameters that are learnable via backpropagation. Here $m_1$ controls the additive centrality normalization of the adjacency matrix. The parameter  $e_1$ controls whether the additive centrality normalization is performed with an emphasis on large centrality values (for large positive values of $e_1$) or with an emphasis on small centrality values (for large negative values of $e_1$). Similarly, we have $e_2$ and $e_3$ controlling the emphasis on large or small centralities, as well as whether the multiplicative centrality normalization of the adjacency matrix is performed symmetrically or predominantly as a column or row normalization. The parameter $m_2$ controls the magnitude and sign of the adjacency matrix term; in particular, a negative $m_2$ corresponds to a more Laplacian-like CGSO, while a positive $m_2$ gives rise to a more adjacency-like CGSO. Finally, $a$ determines the weight of the self-loops that are added to the adjacency matrix, and $m_3$ controls a further diagonal regularization term of the CGSO.  More details on the experimental setup are provided in Section \ref{experim_setup}.  

In our experiments, we notice the best centrality to vary across datasets, although the walk-based centrality CGSO appears to be frequently outperformed  by the $k$-core and PageRank CGSO. More particularly, in some cases e.g. PubMed, it is desirable to use local centrality metrics such as the degree, while for other datasets e.g., Cornell, it's preferable to normalize the adjacency with global centrality metrics. In light of this uncertainty, we can opt for a dynamic, trainable choice of centrality by including both local and global centrality-based CGSO in our CGNN; this can be done by summing the CGSO of the degree matrix with the CGSO of a global centrality metric, e.g., $\mathbf{ \Phi} = \mathbf{ \Phi} (\mathbf{A}, \mathbf{D})+\mathbf{ \Phi}(\mathbf{A}, \mathbf{V_{\textit{core}}}) $. The parameters $m_1,m_2,m_3$ controlling the magnitude of both the local and global CGSOs are then able to learn the relative importance of the local and the global CGSO. In Section \ref{experim_setup}, we provide experimental results for GNNs with such combined CGSOs. 



\textbf{Time Complexity.} We recall that the main complexity of our CGCN model is concentrated around the pre-computation of each centrality score. Computing the degree of all nodes in a graph has a time complexity of $\mathcal{O}(|\mathcal{V}| + |\mathcal{E}|)$, where $|\mathcal{V}|$ is the number of nodes and $|\mathcal{E} |$ is the number of edges in the graph \citep{cormen2022introduction}. For the PageRank algorithm, each iteration requires one vector-matrix multiplication, which on average requires $\mathcal{O}(|\mathcal{V}|^2)$ time complexity. To compute the core numbers of nodes, we iteratively remove nodes with a degree less than a specified value until all remaining nodes have a degree greater than or equal to that value. This operation can be done with a complexity of $\mathcal{O}(|\mathcal{V}| + |\mathcal{E}|)$. Finally, counting the number of walks of length $\ell$ for all the nodes can be done via matrix multiplication $\mathbf{A}^\ell  \mathbb{1}$ where $\mathbb{1} \in \mathbb{R}^N$ is the vector of ones. Since our CGSOs preserve the sparsity pattern of the original adjacency matrix, the complexity of the GNNs in which the CGSOs are inserted is unaltered.


\section{A Spectral Clustering Perspective of CGSOs}\label{synth_exps}
In this section, we analyze CGSOs through the lens of spectral clustering \citep{von2007tutorial,ng2001spectral}. Spectral clustering is a powerful technique that relies on the spectrum of GSOs to reveal underlying structures within graphs, providing insights into their connectivity properties.

\subsection{Spectral Clustering on Stochastic Block Barabási–Albert Models} \label{sec:SBBAM}
Here, we investigate the behavior of CGSOs in the spectral clustering task on synthetic data. Specifically, we propose a new graph generator that is a trivial combination of the well-known Stochastic Block Models (SBM) \citep{holland1983stochastic} and Barabási–Albert (BA) models \citep{albert:02statistical}, we call this generator the Stochastic Block Barabási–Albert Models~(SBBAM). We will now discuss the properties and parameterizations of these two graph generators in turn to then discuss their combination in the SBBAMs. 

\textbf{SBMs.} Firstly, in SBMs the node set of the graph is partitioned into a set of $K$ disjoint blocks $\mathcal{B}_1, \dots, \mathcal{B}_K$, where both the number and size of these blocks is a parameter of the model. In SBMs edges are drawn uniformly at random with probability $p_{ij}$ for $i,j\in \{1, \ldots, K\}$ between nodes in blocks $\mathcal{B}_i$ and $\mathcal{B}_j.$ Note that this parameterization is often simplified by the following constraints $p_{ij} = q$ if $i=j$ and $p_{ij} = p$ if $i\neq j.$ SBMs produce graphs which exhibit cluster structure if $p\neq q,$ which makes them a common benchmark for clustering algorithms and subject to extensive theoretical study \citep{abbe2018community}. Note that SBMs can produce both homophilic graphs if $p<q$ and heterophilic graphs if $q>p$ \citep[Figure 1.2]{Lutzeyer2020}. 

\textbf{BA.} The second ingredient of our SBBAMs are the Barabási–Albert (BA) models \citep{albert:02statistical}. This model generates random scale-free networks using a preferential attachment mechanism, which is why these models are also sometimes referred to as preferential attachment (PA) models. In this PA mechanism we start out with a seed graph and then add nodes to it one-by-one at successive time steps. For each added node $r$ edges are sampled between the added node and nodes existing in the graph, where the probability of connecting to existing nodes is proportional to their degree in the graph. Hence, high degree nodes are more likely to have their degree rise even further than low degree nodes in future time steps of the generation process (an effect, that is some time referred to as `the rich get richer'). BA models characterize several real-world networks \citep{barabasi1999emergence}. A key characteristic of a BA model is their degree distribution. In Lemma \ref{lem:BA_average_degree}, we prove that the density and connectivity of a BA model strongly depend on and positively correlate with the hyperparameter $r$. Thus, we can generate structurally different BA models by choosing different values of $r$. Lemma \ref{lem:BA_average_degree} is proved in Appendix \ref{appendix:proof_lemma_BA}.

\begin{lemma}
\label{lem:BA_average_degree}
Let $G^{\text{BA}}$ be a Barabási–Albert graph of $N$ nodes generated with the hyperparameters $N_0<N$ the initial number of nodes, $r_0\leq N_0^2$ the initial number of random edges and $r$ the number of added edges at each time step. Then, the average degree in the network is,
\begin{equation*}
\overline{\text{\textit{deg}}}(G^{\text{BA}}) = 2r + 2\frac{r_0}{N} - 2N_0 \frac{r }{N},    
\end{equation*}
and thus, as the number of nodes grows, i.e., $N \rightarrow  \infty$, the average degree becomes $\overline{\text{\textit{deg}}}(G^{\text{BA}}) \sim 2r. $
\end{lemma}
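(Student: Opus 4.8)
The plan is to reduce the entire computation to an edge count via the handshaking lemma, $\sum_{i \in \mathcal{V}} deg(i) = 2|\mathcal{E}|$, which immediately gives $\overline{\text{\textit{deg}}}(G^{\text{BA}}) = 2|\mathcal{E}|/N$. The crucial observation that makes this work is that, although the BA generation process is random in \emph{which} edges are formed, the \emph{number} of edges created at each step is deterministic. Consequently the average degree is a deterministic quantity and no expectation over the random attachment is needed.

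First I would track the edge count through the generation process. The seed graph contributes exactly $r_0$ edges on its $N_0$ nodes. The process then inserts nodes one at a time until the graph has $N$ nodes, i.e., it performs $N - N_0$ insertion steps, and at each step the newly inserted node is joined to the existing graph by exactly $r$ edges under the preferential attachment rule. Summing the seed contribution and the per-step contributions gives the closed form
$$|\mathcal{E}| = r_0 + (N - N_0)\,r.$$

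Substituting into $\overline{\text{\textit{deg}}}(G^{\text{BA}}) = 2|\mathcal{E}|/N$ and expanding then yields
$$\overline{\text{\textit{deg}}}(G^{\text{BA}}) = \frac{2\big(r_0 + (N-N_0)r\big)}{N} = 2r + \frac{2r_0}{N} - \frac{2N_0 r}{N},$$
which is precisely the claimed formula. The asymptotic statement follows by holding the hyperparameters $N_0, r_0, r$ fixed and letting $N \to \infty$, so that the two correction terms of order $1/N$ vanish and $\overline{\text{\textit{deg}}}(G^{\text{BA}}) \to 2r$.

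The only point requiring genuine care — and the closest thing to an obstacle in an otherwise elementary argument — is justifying that each insertion adds \emph{exactly} $r$ distinct edges rather than at most $r$. This relies on the standard modeling convention that the $r$ neighbours of a newly inserted node are drawn \emph{without replacement}, ruling out multi-edges and self-loops and pinning the per-step edge increment to $r$. The hypothesis $r_0 \leq N_0^2$ is a mild feasibility condition ensuring the seed graph is well defined (the requested number of seed edges does not exceed the number of available node pairs) and plays no further role in the count.
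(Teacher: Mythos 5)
Your proof is correct and follows essentially the same route as the paper's: count the edges as $r_0 + r(N-N_0)$ (seed edges plus $r$ per insertion step) and apply the handshaking lemma to get $\overline{\text{\textit{deg}}}(G^{\text{BA}}) = 2|\mathcal{E}|/N$, then expand. Your additional remarks on the without-replacement convention and the role of $r_0 \leq N_0^2$ are sound refinements of the same argument, not a different approach.
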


\textbf{SBBAMs.} In our SBBAMs we combine SBMs and BA models, by sampling $K$ BA graphs each of size $|\mathcal{B}_1|, \dots, |\mathcal{B}_K|$ and with parameters $r_1, \ldots, r_K.$ We then randomly draw edges between nodes in different BA graphs, $\mathcal{B}_i$ and $\mathcal{B}_j,$ uniformly at random with probability $p_{ij}$ for $i,j\in \{1, \ldots, K\}.$ In other words, SBBAMs trivally extend SBMs to graph in which each block is generated using a BA model. This allows us to generate graphs with cluster structure, in which the different clusters exhibit potentially interesting centrality distributions, which will serve as an interesting testbed to explore the clustering obtained from the eigenvectors of our CGSOs.

\textbf{Experimental Setting.} To better understand the information contained in the spectral decomposition of our CGSOs we will now generate graphs from our SBBAMs and use the spectral clustering algorithm defined on the basis of our CGSOs to attempt to cluster our generated graphs. 
In our experimental setting, each block or BA graph has 100 nodes and an individual parameter $r,$ specifically, $r_1=5,$ $r_2=10$ and $r_3=15.$ In addition we set $p_{ij}=0.1$ for all $i\neq j$ with $i,j\in \{1, 2, 3\}.$ Figure \ref{fig:adj_matrix_combined_graph} in Appendix \ref{app:_combined_BA} gives an example of an adjacency matrix sampled from this model. We observe variations in edge density across different blocks and in particular observe homophilic cluster structure in the third block, while the first block appears to be predominantly heterophilic, a rather challenging and interesting structure.

Figure \ref{fig:sbm-pam} in Appendix \ref{appendix_k_core_distrib} illustrates the $k$-core distribution of the three individual BA blocks and the combined SBBAM. Notably, the $k$-core distribution distinguishes the three BA graphs, while the nodes in the combined graph exhibit less discernibility by $k$-core. 

Following the graph generation, we perform spectral clustering  (see Algorithm \ref{algo:spectral_clustering} in Appendix~\ref{app:spectral_clustering_algo}) using our CGSOs to asses their ability to recover the blocks in our generated SBBAM. Specifically, we utilize the three eigenvectors of $\mathbf{ \Phi} = \mathbf{V}^{e_2} \mathbf{A} \mathbf{V}^{e_3}$ corresponding to the three largest eigenvalues of different CGSOs defined in Section \ref{sec:formulation}. Working with this particular parametrized form our CGSOs further allows us to study the effect of different centrality normalizations with $e_2, e_3 \in [-1.5, 1.5].$ We repeated each experiment 200 times, and then reported the mean and standard deviation of Adjusted Mutual Information (AMI) and Adjusted Rand Index (ARI) values. For consistency, we used the same 200 generated graphs for all the GSOs and the baselines.

\begin{figure*}[t]
\centering 

\includegraphics[width=\textwidth]{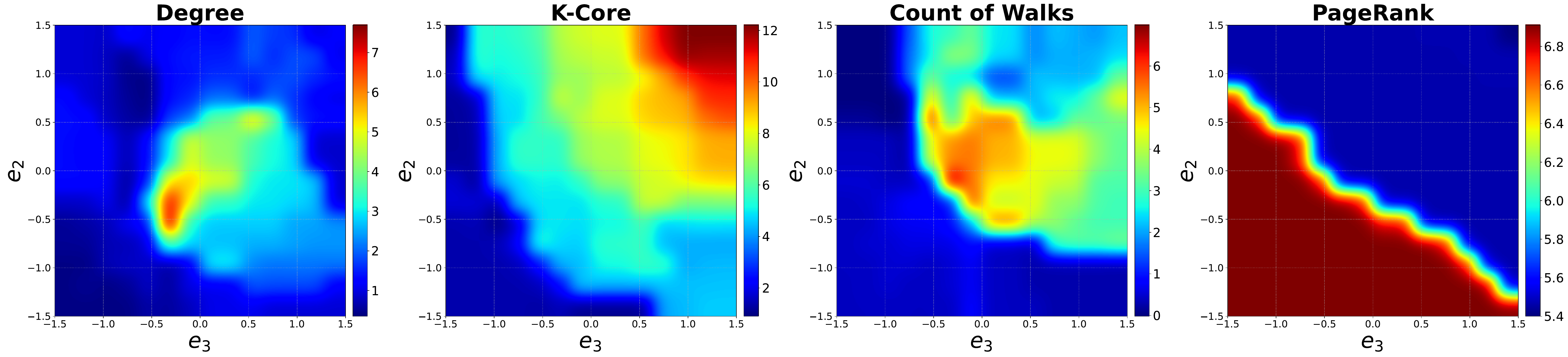}
\caption{Result for the spectral clustering task on the Cora graph \citep{dataset_node_classification} with core numbers considered as clusters. We report the values of the Adjusted Mutual Information (AMI) in percentage for different combinations of the exponents $(e_2, e_3)$ in $\mathbf{V}^{e_2} \mathbf{A} \mathbf{V}^{e_3}.$} 
\label{fig:ami_cora} 
\end{figure*}

In Figure \ref{fig:ami_cora}, we report the AMI values using the four centralities. As noticed, while having competitive results between the degree centrality, the PageRank score and the count of walks, we reach the highest AMI values by using the $k$-core centrality metrics. Using the degree centrality, we reach the highest AMI value when both exponent $e_2$ and $e_3$ are negative, while for the $k$-core and the number of walks, we notice a different behavior as the AMI increase when both the exponents $e_2$ and $e_3$ are positive. Thus, we conclude that nodes with higher $k$-core and count of walks are important for this setup, i.e., when the node labels are positively correlated with global centrality metrics such as the $k$-core. We report the ARI values of the same experiment in Appendix~\ref{ari_results_appendix}.  


\subsection{Centrality Recovery in Spectral Clustering}

\begin{wraptable}{r}{.5\textwidth}
\vspace{-.6cm}
\caption{The result of the spectral clustering task on the synthetic graph data.  We present the mean and standard values of AMI and ARI in percentage. \textcircled{1} Spectral clustering using the centrality based GSOs, \textcircled{2} Other baselines.}
\centering
\resizebox{0.5\textwidth}{!}{%
\begin{tabular}{clrr}\toprule
   & \textbf{Method} & \textbf{AMI in \%}      & \textbf{ARI in \%}            \\
    \toprule
\multirow{4}{*}{\textcircled{2}}& Fast Greedy & 17.27 (4.28) &  19.98 (5.03) \\
& Louvain &14.37  (3.34) & 14.82  (3.92)  \\
& Node2Vec & 1.11 (0.92) & 1.17 (0.96)  \\
& Walktrap & 1.39 (1.16) & 1.14 (0.97)  \\
\midrule
\multirow{4}{*}{\textcircled{1}} & CGSO  w/ $\mathbf{D}$  & 23.26 (3.36) & 22.95 (3.86) \\
& CGSO w/ $\mathbf{V}_{\text{\textit{core}}}$      & \textbf{35.78 (4.67)}  &  \textbf{33.76 (5.83)} \\
& CGSO w/ $\mathbf{V}_{\ell\text{\textit{-walks}}}$ & 23.85 (3.64)  &   25.00 (4.18) \\
& CGSO  w/ $\mathbf{V}_{\text{\textit{PR}}}$ &  35.62 (4.90) &  33.19 (6.03) \\ 
\bottomrule
\end{tabular}
}
\label{tab:SBM_PAM}
\end{wraptable}
In this experiment, we aim to discern the CGSOs' effectiveness in recovering clusters based on centrality within a real-world graph. Using the Cora dataset,  we chose core numbers to indicate centrality-based clusters.  We aim to assess the capacity of various CGSOs to effectively recover clusters reflective of core numbers. This investigation aims to shed light on their potential utility in capturing centralities and hierarchical structures within intricate graphs.

\textbf{Spectral Clustering on Cora.}  
In this experiment, we consider only the largest connected component of the Cora graph. We use the spectral clustering algorithm on the different CGSOs to recover $K$ clusters, where $K$ is the number of possible core numbers in the graph. We repeat each experiment 10 times, and report the average AMI and ARI values. We also compared our CGSOs with the popular Louvain community detection method \citep{blondel2008fast}, the node2vec node embedding methods \citep{grover2016node2vec} combined with the $k$-means algorithm, the Walktrap algorithm \citep{pons2005computing}, and the Fast Greedy Algorithm which also optimizes modularity by greedily adding nodes to communities \citep{clauset2004finding}. For the walk count node centrality matrix, we used $\ell=2$ in all our experiments. 
We consider the CGSO $\mathbf{ \Phi} = \mathbf{V}^{e_2} \mathbf{A} \mathbf{V}^{e_3}$, where we normalize the adjacency matrix with the topological diagonal matrix $\mathbf{V}$ using different exponents $(e_2, e_3)$. 

The results of the spectral clustering on this synthetic graph are presented in Table \ref{tab:SBM_PAM}. As expected, normalizing the adjacency matrix with $k$-core yields higher AMI and ARI values. This observation indicates an improved discernment of each node's membership in its respective cluster, achieved through the incorporation of global centrality metrics. Our CGSO outperforms well-known community detection techniques, such as the Louvain algorithm,  which optimizes the modularity, measuring the density of links inside communities compared to links between communities. However, in our setting, some blocks have fewer inter-edges than intra-edges with other blocks, thus making it difficult for the Louvain algorithm to cluster these nodes using the edge density. This experiment further reinforces the intuition that if different clusters exhibit different centrality distributions then our CGSOs are able to capture  this difference better than other clustering alternatives which leads to better clustering performance.

\section{Experimental Evaluation}\label{experim_setup}
We begin by discussing our experimental setup. Further details on the datasets we evaluate on and the training set-up can be found in Appendix \ref{app_data_impl}.

\textbf{Baselines.} We experiment with two particular instances of our proposed CGNN model, using a GCN and GATv2 as the backbone models, we refer to this instance as CGCN and CGATv2, respectively. We compared the proposed CGCN to GCN with classical GSOs: the adjacency matrix $\mathbf{A}$,  Unormalised Laplacian $\mathbf{L} = \mathbf{D}-\mathbf{A}$,  Singless Laplacian $\mathbf{Q} = \mathbf{D}+\mathbf{A}$ \citep{cvetkovic2010towards},  Random-walk Normalised Laplacian $\mathbf{L_{rw}} = \mathbf{I}-\mathbf{D}^{-1}\mathbf{A}$,  Symmetric Normalised Laplacian $\mathbf{L_{sym}} = \mathbf{I}-\mathbf{D}^{-1/2}\mathbf{A}\mathbf{D}^{-1/2}$,  Normalised Adjacency $\mathbf{\hat{A}} = \mathbf{D}^{-1/2}\mathbf{A}\mathbf{D}^{-1/2}$ \citep{kipf2016semi} and  Mean Aggregation $\mathbf{H} = \mathbf{D}^{-1}\mathbf{A}$ \citep{xu2018how}. We also compare to other standard GNN baselines: Graph Attention Network (GAT) \citep{veličković2018graph}, Graph Attention Network v2 (GATv2) \citep{brody2022how}, Graph Isomorphism Network (GIN) \citep{xu2018how}, and Principal Neighbourhood Aggregation (PNA) \citep{corso2020principal}.

\begin{table*}[t] 
\centering
\caption{Classification accuracy ($\pm$ standard deviation) of the models on different benchmark node classification datasets. The higher the accuracy (in \%) the better the model. \textcircled{1} GCN-based models \textcircled{2} Other vanilla GNN baselines \textcircled{3} CGCN \textcircled{4} CGATv2. Highlighted are the  {\color[HTML]{FF0000} \textbf{first}},  {\color[HTML]{0000FF} \underline{second} } best results. OOM means \textit{Out of memory}.}
\label{tab:topological_GCN}
\resizebox{\textwidth}{!}{%
\begin{tabular}{cl|llllllll}\toprule
\multicolumn{1}{l}{} & Model & CiteSeer  & PubMed & arxiv-year  & chamelon & Cornell  & deezer-europe  & squirrel & Wisconsin  \\ \midrule
\multirow{7}{*}{\textcircled{1}}   & GCN w/ $A$ & 64.95 (0.58) & 77.12 (0.61) & 38.55 (0.71) & 61.03 (1.31) & 57.03 (3.91) & 57.65 (0.84) & 22.38 (6.06) & 54.51 (1.47) \\
            & GCN w/ $\mathbf{L}$    & 28.11 (0.54) & 43.65 (0.71) & 32.81 (0.29) & 56.97 (0.75) & 54.32 (0.81) & 53.92 (0.59) & 36.20 (0.84) & 60.00 (2.00) \\
            & GCN w/ $\mathbf{Q}$ &  63.28 (0.80) & 76.57 (0.59) & 33.76 (2.36) & 53.88 (2.35) & 35.41 (2.55) & 56.79 (1.79) & 27.69 (2.21) & 53.33 (0.78) \\
            & GCN w/ $\mathbf{L_{rw}}$   & 30.18 (0.74) & 59.68 (1.03) & 36.36 (0.24) & 48.77 (0.54) & 61.62 (1.08) & 54.04 (0.44) & 34.27 (0.35) & 65.10 (0.78) \\
            & GCN w/  $\mathbf{L_{sym}}$  &  29.90 (0.66) & 57.68 (0.45) & 36.49 (0.14) & 50.81 (0.24) & 60.27 (1.24) & 53.30 (0.45) & 35.96 (0.28) & 66.08 (2.16) \\
            & GCN w/ $\mathbf{\hat{A}}$   & 68.74 (0.82) & 78.45 (0.22) & 42.23 (0.25) & 58.44 (0.26) & 56.22 (1.62) &  60.68 (0.45) & 37.73 (0.33) & 57.45 (0.90) \\
            & GCN w/ $\mathbf{H}$   & 66.15 (0.55) & 76.45 (0.48) & 41.27 (0.21) & 56.51 (0.47) & 54.86 (1.24) & 59.45 (0.50) & 38.23 (0.47) & 54.31 (0.90) \\ \midrule

\multirow{5}{*}{\textcircled{2}}   & GIN   & 66.62 (0.44) & 78.22 (0.52) & 38.27 (3.43) & 61.60 (1.05) & 45.95 (3.42) & OOM & 25.78 (5.12) & 58.82 (1.75) \\
            & GAT  & 59.84 (3.14) & 71.55 (4.69) & 41.26 (0.30) & 63.60 (1.70) & 49.46 (8.11) & 57.67 (0.74) & 40.37 (2.89) & 55.88 (2.81) \\
            & GATv2  & 63.01 (2.97) & 73.96 (2.22) & 41.16 (0.25) & 64.14 (1.53) & 43.78 (4.80) & 56.77 (1.19) & {\color[HTML]{FF0000} \textbf{42.63 (2.61)}} & 53.53 (4.12) \\
            & PNA  & 48.89 (11.15)                       & 70.83 (6.51) & 32.45 (2.34) & 22.89 (1.09) & 40.54 (0.00) & OOM & OOM & 53.14 (2.55) \\ \midrule  
\multirow{4}{*}{\textcircled{3}}   & CGCN w/ $\mathbf{D}$   &  68.35 (0.45) & {\color[HTML]{FF0000} \textbf{78.70 (1.10)}} & 45.39 (0.45) & {\color[HTML]{0000FF} \underline{64.17 (8.10)}} & 72.43 (13.09)                       & 58.04 (1.06) & {\color[HTML]{0000FF} \underline{42.30 (1.34)}} & 76.86 (7.70)      \\
            & CGCN w/ $\mathbf{V}_{\text{\textit{core}}}$   &  68.40 (0.75) & 77.91 (0.41) & {\color[HTML]{FF0000} \textbf{47.27 (0.31)}} & 63.68 (5.00) & 73.78 (12.16)        & {\color[HTML]{0000FF} \underline{60.90 (2.28)}} & 40.59 (2.21) &  74.90 (6.52)       \\
            & CGCN w/ $\mathbf{V}_{\ell\text{\textit{-walks}}}$  & 67.31 (0.75) & 77.57 (0.37) & 39.35 (0.49) & {\color[HTML]{FF0000} \textbf{66.21 (2.49)}} &  72.70 (3.24) & 59.15 (1.24) & 36.03 (5.81) &  74.90 (4.19)        \\
            & CGCN w/ $\mathbf{V}_{\text{\textit{PR}}}$  &  67.11 (0.56) & 78.17 (4.27) & {\color[HTML]{0000FF} \underline{47.14 (0.31)}} & 60.94 (7.00) & {\color[HTML]{0000FF} \underline{76.22 (16.3)}} & {\color[HTML]{FF0000} \textbf{63.41 (0.77)}} & 32.17 (3.94) & 80.78 (11.7)  \\ \midrule

\multirow{4}{*}{\textcircled{4}}   & CGATv2 w/ $\mathbf{D}$  &  68.60 (0.60) &  77.46 (0.51) &  45.09 (0.17) &  58.22 (2.74) & {\color[HTML]{FF0000} \textbf{76.49 (4.37)}} &  OOM  &  35.30 (2.32) &  {\color[HTML]{FF0000} \textbf{85.69 (3.17) }}        \\
            & CGATv2 w/ $\mathbf{V}_{\text{\textit{core}}}$    &  {\color[HTML]{0000FF} \underline{68.83 (0.66)}} &  77.99 (0.43) &  44.38 (0.25) &  55.83 (2.28) &  75.95 (3.72) &  OOM  &  34.17 (1.45) &  {\color[HTML]{0000FF} \underline{85.10 (2.80) }}     \\
            & CGATv2 w/ $\mathbf{V}_{\ell\text{\textit{-walks}}}$&  68.11 (0.91) &  75.43 (0.89) &  46.70 (0.21) &  55.59 (2.57) &  74.32 (5.70) & OOM   &  34.25 (2.15) & 83.53 (2.66) \\
            & CGATv2 w/ $\mathbf{V}_{\text{\textit{PR}}}$    &  {\color[HTML]{FF0000} \textbf{68.97 (0.65)}} &  {\color[HTML]{0000FF} \underline{78.46 (0.23)}} &  41.64 (0.18) &  58.82 (1.68) &  74.05 (4.55) & OOM   &  38.41 (1.66) &  80.78 (2.45)  \\ \bottomrule
\end{tabular}
                         
}

\end{table*}

\subsection{Experimental Results} 

We present the performance of our CGCN and CGATv2 in Table \ref{tab:topological_GCN}. The performance of CSGC, i.e. centrality based Simple Graph Convolutional Networks \citep{wu2019simplifying}, in Appendix \ref{app:sgc}. We also incorporated our learnable CGSOs into H2GCN \citep{zhu2020beyond} resulting CH2GCN, that go beyond the message passing scheme and which is designed for heterophilic graphs, we detailed the experiment and the results in Appendix \ref{app:heterop_gnn}. The results of CGCN, CGATv2, CSGC and the other baselines on additional datasets can be found in Table \ref{tab:additional_CA_GCN} of Appendix \ref{node_classification_results_appendix}, and Tables \ref{tab:sgc_1} and \ref{tab:sgc_2} of Appendix \ref{app:sgc}. It has been observed that, across numerous datasets, CGCN and CGATv2 outperform classical GSOs and vanilla GNNs. Moreover, it is noteworthy that the optimal choice of centrality for CGCN varies depending on the specific dataset. To better understand the choice of each centrality, we displayed the learned weights of CGCN  together with some statistics of each dataset in Tables \ref{tab:degree_hyper}, \ref{tab:kcore_hyper}, \ref{tab:pagerank_hyper} and  \ref{tab:count_walks_hyper}. Several trends are clear: \textit{i)} For all the centrality metrics, the exponent $e_1$ is usually positive for most of the datasets, which indicates that an additive normalization of the GSO with our centralities in-style of the unnormalized Laplacian often leads to optimal graph representation. However, the exponent values $e_2$ and $e_3$ have different behaviors across centrality metrics, e.g., when using the PageRank centrality, the exponents $e_2$ and $e_3$ are almost null for the graph datasets that are strongly homophilous indicating that an unnormalized sum over neighborhoods is optimal. \textit{ii)} When using the PageRank and Count of walks centrality metrics, we notice that the parameter $a$ is always negative for non-homophilous datasets. This is a very interesting finding indicating that a representation with negatively weighted self-loops is advantageous for non-homophilous datasets (an observation that we have not previously seen in the literature). \textit{iii)} For the datasets where the $k$-core centrality performs well (i.e. Cornell, arxiv-year, Penn94, and deezer-europe), we notice that the parameter $m_3$ is very close to zero, i.e, the regularization by adding an identity matrix to the CGSO turns out to be best-ignored in these settings. These findings suggest that the optimal GSO components vary depending on the graph type, highlighting the need for adaptable CGSO approaches rather than relying solely on classical GSOs.

General intuition on the choice of centrality that we can provide relates to the fact that the node degree is a local centrality metric, while the remaining three centralities we consider correspond to global metrics. Therefore, it is apparent that if the learning task only requires local information a degree-based normalization of the GSO is likely beneficial, while global centrality metrics are appropriate if more global information is required. Beyond this statement it seems to be difficult to provide general guidance on the choice of the global centrality metrics. Therefore, including both local and global centrality-based CGSO in the CGNN  might be optimal to dynamically distinguish the best type of centrality. We present the results of this experiment in Tables \ref{tab:comb_centralities_1} and \ref{tab:comb_centralities_2}. By combining local and global centralities in the CGNNs, we usually increase their performance. 

\begin{table}[t] 
\centering
\caption{Classification accuracy ($\pm$ standard deviation) of the models on different benchmark node classification datasets. The higher the accuracy (in \%) the better the model.}
\label{tab:comb_centralities_1}
\resizebox{\textwidth}{!}{%
\begin{tabular}{l|llllllll}\toprule
 Model & CiteSeer  & PubMed & arxiv-year  & chamelon & Cornell  & deezer-europe  & squirrel & Wisconsin  \\ \hline
 CGCN w/ $\mathbf{D}$   &  68.35 (0.45) & 78.70 (1.10) & 45.39 (0.45) & 64.17 (8.10) & 72.43 (13.09)  & 58.04 (1.06) & 42.30 (1.34) & 76.86 (7.70)      \\
             CGCN w/ $\mathbf{V}_{\text{\textit{core}}}$    & 68.40 (0.75) & 77.91 (0.41) & 47.27 (0.31) & 63.68 (5.00) & 73.78 (12.16)        & \textbf{60.90 (2.28)} & 40.59 (2.21) &  74.90 (6.52)  \\
             CGCN w/ $\mathbf{V}_{\ell\text{\textit{-walks}}}$  & 67.31 (0.75) & 77.57 (0.37) & 39.35 (0.49) & \textbf{66.21 (2.49)} &  72.70 (3.24) & 59.15 (1.24) & 36.03 (5.81) &  74.90 (4.19)        \\
             CGCN w/ $\mathbf{V}_{\text{\textit{PR}}}$  &  67.11 (0.56) & 78.17 (4.27) & 47.14 (0.31) & 60.94 (7.00) & \textbf{76.22 (16.3)} & 63.41 (0.77) & 32.17 (3.94) & 80.78 (11.7)  \\ \midrule

CGCN w/ $\mathbf{D} - \mathbf{V}_{\text{\textit{core}}}$      &  \textbf{69.0 (0.64)} &  \textbf{78.77 (0.34) } &  48.37 (0.15) &  65.04 (4.37) &  73.24 (6.56) &  59.81 (0.51) &  40.74 (4.77) &  74.51 (3.62)   \\
             CGCN w/ $\mathbf{D} - \mathbf{V}_{\ell\text{\textit{-walks}}}$ &  67.99 (0.55) &  78.53 (0.39) &  \textbf{49.12 (0.41)} &  58.09 (3.78) &  74.32 (2.77) &  59.30 (0.70) &  34.49 (2.66) &  \textbf{81.37 (3.64)}\\
             CGCN w/ $\mathbf{D} - \mathbf{V}_{\text{\textit{PR}}}$   &  68.45 (0.6) &  77.75 (0.55) &  39.63 (1.27) &  64.32 (3.13) &  72.97 (4.98) &  59.28 (0.75) &  \textbf{42.80 (6.58)} &  74.31 (3.97)\\  \bottomrule
\end{tabular}
                         
}

\end{table}

\section{Conclusion and Limitations}\label{sec:conclusion}
\textbf{Conclusion.} In this work, we have proposed CGSOs, a novel class of Graph Shift Operators (GSOs) that can leverage different centrality metrics, such as node degree, PageRank score, core number, and the count of walks of a fixed length. Furthermore, we have modified the message-passing steps of Graph Neural Networks (GNNs) to integrate these CGSOs, giving rise to a novel model class the CGNNs. Experimental results comparing our CGNN models to existing vanilla GNNs show the superior performance of CGNN on many real-world datasets. These experiments furthermore allowed us to analyse the optimal parameters of our CGSO, which led to new and interesting insight such as for example an apparent benefit of negatively weighted self-loops for non-homophilous graphs. 
To further understand the cases where each centrality is beneficial, we conducted additional experiments focused on spectral clustering using two distinct types of synthetic graphs. Through these experiments, we identified instances where CGSOs outperformed conventional GSOs. 

\textbf{Limitations.} As for the limitations of our approach, rather trivially, the inclusion of centrality metrics is only beneficial if centrality metrics are related to our currently performed learning task on a given dataset. In our experiments, we often observe this case. However, this will not hold for all learning tasks and datasets. In future work, we aim to test the performance of CGNNs on other graph tasks, e.g., link prediction, and to more carefully analyse the learned CGSO parameters. We also aim to extend our theoretical understanding of the CGSOs via further spectral study.


\bibliography{iclr2025_conference}
\bibliographystyle{iclr2025_conference}
\clearpage
\appendix
\vbox{%
\hsize\textwidth
\linewidth\hsize
\vskip 0.1in
\centering
{\LARGE\bf Appendix: Centrality Graph Shift Operators\\ for Graph Neural Networks\par}
\vspace{2\baselineskip}
}

\section{Datasets and Implementation Details}\label{app_data_impl} 
In this section, we present the benchmark datasets used for our experiments, and the process used for the training.
\subsection{Statistics of the Node Classification Datasets}

We use ten widely used datasets in the GNN literature. In particular, we run experiments on the node classification task using the citation networks Cora, CiteSeer, and PubMed \citep{dataset_node_classification}, the co-authorship networks CS and Physiscs \citep{cs_data}, the citation network between Computer Science arXiv papers OGBN-Arxiv  \citep{hu2020open}, the Amazon Computers and Amazon Photo networks \citep{cs_data}, the non-homophilous datasets Penn94 \citep{traud2012social}, genius \citep{lim2021expertise}, deezer-europe \citep{rozemberczki2020characteristic} and arxiv-year \citep{hu2020open}, and the disassortative datasets Chameleon, Squirrel \citep{rozemberczki2021multi}, 
 and Cornell, Texas, Wisconsin from the WebKB dataset \citep{lim2021large}. 
Characteristics and information about the datasets utilized in the node classification part of the study are presented in Table \ref{tab:data_statistics}.

\begin{table}[h]

\caption{Statistics of the node classification datasets used in our experiments.}
\label{tab:data_statistics}
\vskip 0.15in
\begin{center}
\begin{tabular}{lrrrrr}
\hline
Dataset & \#Features & \#Nodes & \#Edges & \#Classes & Edge Homophily \\
\hline
Cora    & 1,433 & 2,708   & 5,208    & 7 & 0.809 \\
CiteSeer   & 3,703 & 3,327 & 4,552 & 6 & 0.735\\
PubMed    & 500 & 19,717 & 44,338 & 3 & 0.802\\
CS    & 6,805 & 18,333 & 81,894 & 15 & 0.808\\
arxiv-year    & 128 & 169,343 & 1,157,799 & 5 &  0.218\\
chameleon  &   2,325  &   2,277  &   62,792  &   5  &   0.231\\
Cornell  &   1,703  &   183  &   557  &   5  &   0.132 \\
deezer-europe  &   31,241  &   28,281  &   185,504  &   2  &   0.525 \\
squirrel  &   2,089  &   5,201  &   396,846  &   5  &   0.222 \\
Wisconsin  &   1,703  &   251  &   916  &   5  &   0.206  \\
Texas  &   1,703  &   183  &   574  &   5  &   0.111 \\
Photo  &   745  &   7,650  &   238,162  &   8  &   0.827 \\
ogbn-arxiv  &   128  &   169,343  &   2,315,598  &   40  &   0.654  \\
Computers  &   767  &   13752  &   491,722  &   10  &   0.777 \\
Physics  &   8,415  &   34,493  &   495,924  &   5  &   0.931 \\
Penn94  &   4,814  &   41,554  &   2,724,458  &   3  &   0.470  \\
\hline
\end{tabular}

\end{center}
\vskip -0.1in

\end{table}
\subsection{Implementation Details}

We train all the models  using the Adam optimizer \citep{kingma_adam}. To account for the impact of random initialization, each experiment was repeated 10 times, and the mean and standard deviation of the results were reported. The experiments have been run on both a NVIDIA A100 GPU and a RTX A6000 GPU.

\textbf{Training of our CGNN.}
We train our model using the Adam optimizer \citep{kingma_adam}, with a weight decay on the parameters of $5 \times 10^{-4}$, an initial learning rate of $0.005$ for the exponential parameters and an initial learning rate of $0.01$ for all other model parameters. We repeated the training 10 times to test the stability of the model. We tested 7 initialization of the weights $(m_1,m_2,m_3,e_1,e_2,e_3,a)$. These initializations are reported in Table \ref{tab:weights_initializations} in Appendix \ref{app_data_impl}, and correspond to classical GSOs when the chosen centrality is the degree. 
For the  Cora, CiteSeer, and Pubmed datasets, we used the provided train/validation/test splits. For the remaining datasets, we followed the
framework of \citet{lim2021large,rozemberczki2021multi}.

\subsection{Weights Initialization}
In this part, we present the different initializations of CGSO. When the chosen centrality is the degree, i.e. $\mathbf{V}=\mathbf{D}$, the  initializations corresponds to popular classical GSO \citep{dasoulas2021learning}.

\begin{table}[h]
\caption{Differenet initialization of the weights $(m_1,m_2,m_3,e_1,e_2,e_3,a)$.}
\label{tab:weights_initializations}
\begin{center}
\resizebox{\textwidth}{!}{
\begin{tabular}{lll}
\toprule
Initialization of  $(m_1,m_2,m_3,e_1,e_2,e_3,a)$ & Corresponding GSO & Description when  $V=D$   \\
\midrule
$(0, 1, 0, 0, 0, 0, 0)$   &  $\mathbf{A}(\mathbf{V}) = \mathbf{A}$ & Adjacency matrix\\
$(1, -1, 0, 1, 0, 0, 0)$ &  $\mathbf{L}(\mathbf{V}) =\mathbf{V}-\mathbf{A}$& Unnormalised Laplacian matrix \\
$(1, 1, 0, 1, 0, 0, 0) $  & $\mathbf{Q}(\mathbf{V}) =\mathbf{V}+\mathbf{A}$ & Signless Laplacian matrix\\
$(0, -1, 1, 0, -1, 0, 0) $   & $\mathbf{L_{rw}}(\mathbf{V}) =\mathbf{I} - \mathbf{V}^{-1}\mathbf{A}$ & Random-walk Normalised Laplacian\\
$(0, -1, 1, 0, -1/2, -1/2, 0) $    &$\mathbf{L_{sym}}(\mathbf{V}) =\mathbf{I} - \mathbf{V}^{-1/2}\mathbf{A}V^{-1/2}$ & Symmetric Normalised Laplacian\\
$(0, 1, 0, 0, -1/2, -1/2, 1) $     & $\mathbf{\hat{A}}(\mathbf{V}) = \mathbf{V}^{-1/2}\mathbf{A}_1\mathbf{V}^{-1/2}$&Normalised Adjacency matrix \\
$(0, 1, 0, 0, -1, 0, 0)  $     & $ \mathbf{H}(\mathbf{V}) =\mathbf{V}^{-1}\mathbf{A}$& Mean Aggregation Operator\\
\bottomrule
\end{tabular}
}
\end{center}
\end{table}

\subsection{Hyperparameter configurations}
For a more balanced comparison, however, we use the same training procedure for all the models. The hyperparameters in each dataset where performed using a Grid search on the classical GCN (i.e. with the GSO : Normalised adjacency) over the following search space:

\begin{itemize}
    \item Hidden size : $[16,32,64,128,256,512],$
    \item Learning rate : $[0.1, 0.01, 0.001],$
    \item Dropout probability: $[0.2,0.3,0.4,0.5,0.6,0.7,0.8].$
\end{itemize}

The number of layers was fixed to 2. The optimal hyperparameters can be found in Table \ref{tab:gcn_tuned_hyperparams}.
\begin{table*}[h]
\caption{Hyperparameters used in our experiments.}
\label{tab:gcn_tuned_hyperparams}
\vskip 0.15in
\begin{center}
\begin{tabular}{lccc}
\toprule
Dataset & Hidden Size & Learning Rate & Dropout Probability   \\
\midrule
Cora    & 64 & 0.01 & 0.8   \\
CiteSeer    &64 & 0.01 & 0.4   \\
PubMed     &64 & 0.01 & 0.2   \\\
CS     & 512 & 0.01 & 0.4 \\
arxiv-year   & 512 & 0.01  & 0.2 \\
chameleon   & 512 & 0.01  & 0.2 \\
cornell   & 512 & 0.01  & 0.2 \\
deezer-europe   & 512 & 0.01  & 0.2 \\
squirrel   & 512 & 0.01  & 0.2 \\
Wisconsin   &  512 & 0.01  & 0.2 \\
Texas   & 512 & 0.01  & 0.2 \\
Photo     & 512 & 0.01 & 0.6 \\
OGBN-Arxiv    & 512 & 0.01 & 0.5  \\
Computers     & 512 & 0.01 & 0.2 \\
Physics   & 512 & 0.01 & 0.4 \\
Penn94     & 64 & 0.01 & 0.2 \\

\bottomrule
\end{tabular}
\end{center}
\vskip -0.1in
\end{table*}

\section{Additional Results for the Node Classification Task}\label{node_classification_results_appendix} 
To further evaluate our \textit{CGCN} and \textit{CGATv2}, we compute its performance on additional datasets. The results of this study are presented in Table \ref{tab:additional_CA_GCN}. 

\begin{table*}[h] 
\centering
\caption{Classification accuracy ($\pm$ standard deviation) of the models on different benchmark node classification datasets. The higher the accuracy (in \%) the better the model. \textcircled{1} GCN Based models \textcircled{2} Other Vanilla GNN baselines \textcircled{3} CGCN \textcircled{4} CGATv2. Highlighted are the  {\color[HTML]{FF0000} \textbf{first}},{\color[HTML]{0000FF} \underline{second} } best results. OOM means \textit{Out of memory}}
\label{tab:additional_CA_GCN}
\resizebox{\textwidth}{!}{%
\begin{tabular}{c|l|llllllll}\toprule
\multicolumn{1}{l}{} & Model & Cora  & Texas & Photo  & ogbn-ariv & CS  & Computers  & Physics & Penn94  \\ \hline
\multirow{7}{*}{\textcircled{1}}   & GCN w/ $A$ &  78.61 (0.51) & 63.51 (2.18) & 82.31 (2.61) & 13.23 (6.44) & 87.70 (1.25) & 69.32 (3.64) & 88.92 (1.93) & 52.35 (0.36) \\
            & GCN w/ $\mathbf{L}$   &31.57 (0.41) & {\color[HTML]{FF0000} \textbf{84.32 (2.65)}} & 27.42 (6.23) & 10.91 (1.49) & 23.75 (3.22) & 26.27 (3.89) & 35.31 (3.71) & 65.31 (0.59) \\
            & GCN w/ $\mathbf{Q}$  & 77.32 (0.50) & 60.54 (1.32) & 77.06 (6.73) & 10.50 (1.97) & 89.42 (1.31) & 47.72 (18.37) & 90.69 (2.13) & 53.46 (2.16) \\
            & GCN w/ $\mathbf{L_{rw}}$ & 26.59 (1.11) & 78.38 (2.09) & 24.60 (4.21) & 8.07 (0.07) & 26.34 (4.09) & 13.76 (3.96) & 28.19 (3.75) & 69.82 (0.44) \\
            & GCN w/  $\mathbf{L_{sym}}$ & 26.79 (0.50) & 71.35 (1.32) & 22.82 (2.67) & 20.18 (0.24) & 24.39 (1.96) & 16.06 (5.19) & 30.94 (3.11) & 70.57 (0.30) \\
            & GCN w/ $\mathbf{\hat{A}}$ & {\color[HTML]{FF0000} \textbf{80.84 (0.40)}} & 60.81 (1.81) & 78.94 (1.65) & 65.80 (0.14) & 91.52 (0.75) & 68.91 (3.00) & {\color[HTML]{FF0000} \textbf{93.72 (0.80)}} & 74.60 (0.42) \\
            & GCN w/ $\mathbf{H}$ & {\color[HTML]{0000FF} \underline{80.15 (0.37)}} & 59.46 (0.00) & 73.95 (4.75) & 63.34 (0.15) & 90.98 (1.84) & 62.01 (4.36) &  92.16 (1.12) & 71.78 (0.47) \\ \hline

\multirow{5}{*}{\textcircled{2}}   & GIN  &  79.06 (0.47) & 57.03 (1.89) & 83.00 (2.52) & 9.30 (6.42) & 89.53 (1.20) & 55.89 (13.45) & 89.15 (2.44) & OOM \\
            & GAT & 77.73 (1.83) & 52.16 (6.74) & 71.56 (3.48) & 67.36 (0.13) & 67.67 (3.96) & 59.73 (3.59) & 80.91 (4.48) & 73.85 (1.38) \\
            & GATv2 & 74.53 (2.48) & 48.11 (3.78) & 73.49 (2.49) & 68.14 (0.07) & 70.13 (4.92) & 58.18 (4.76) & 83.28 (3.68) & {\color[HTML]{0000FF} \underline{75.54 (2.54)}} \\
            & PNA & 56.67 (10.53) & 63.51 (4.05) & 16.75 (5.59) & OOM & OOM & 13.62 (6.39) & OOM & OOM \\\hline  
\multirow{5}{*}{\textcircled{3}}   & CGCN w/ $\mathbf{D}$  & 79.45 (0.58) & 81.89 (9.38) & {\color[HTML]{0000FF} \underline{88.78 (1.74)}} & 69.09 (0.21) & 91.28 (1.29) & {\color[HTML]{FF0000} \textbf{79.26 (1.87)}} & 92.51 (1.16) & 73.06 (0.34) \\
            & CGCN w/ $\mathbf{V}_{\text{\textit{core}}}$  & 79.80 (0.43)  & 77.84 (5.51) & 88.53 (1.40) & 65.54 (0.57) & 91.37 (1.18) & 77.35 (2.67) & 91.98 (1.49) & {\color[HTML]{FF0000} \textbf{78.11 (3.74)}} \\
            & CGCN w/ $\mathbf{V}_{\ell\text{\textit{-walks}}}$  & 79.52 (0.35) & 78.11 (5.82) & 83.72 (2.03) & 22.54 (8.22) & 89.87 (1.20) & 68.56 (3.39) & 89.84 (2.74) & 68.44 (0.37) \\
            & CGCN w/ $\mathbf{V}_{\text{\textit{PR}}}$ & 79.51 (15.01) & 82.70 (4.95) & 81.28 (6.08) & 68.56 (0.18) & 88.76 (30.68) & 65.54 (6.43) & 89.64 (10.3) & 72.59 (0.84) \\ \midrule

\multirow{4}{*}{\textcircled{4}}   & CGATv2 w/ $\mathbf{D}$ &  79.07 (0.64) &  82.7 (5.30) &  87.97 (1.77) &  70.09 (0.10)  &  {\color[HTML]{0000FF} \underline{91.48 (1.05)}} &  78.62 (2.35) &  91.32 (1.18) &  72.81 (0.36)       \\
            & CGATv2 w/ $\mathbf{V}_{\text{\textit{core}}}$  &  79.03 (0.96) &  83.78 (6.62) &  {\color[HTML]{FF0000} \textbf{89.72 (1.54)}} &  {\color[HTML]{0000FF} \underline{69.93 (0.13)}}  &  {\color[HTML]{FF0000} \textbf{91.91 (1.06)}} &  77.31 (3.33) &  91.15 (1.07) &  72.86 (0.41) \\
            & CGATv2 w/ $\mathbf{V}_{\ell\text{\textit{-walks}}}$&  78.58 (0.58) &  79.73 (4.72) &  88.11 (2.02) & {\color[HTML]{FF0000} \textbf{70.51 (0.24)}}   &  90.73 (1.46) &  {\color[HTML]{0000FF} \underline{79.09 (1.66)}} &  89.98 (1.37) &  72.79 (0.43) \\
            & CGATv2 w/ $\mathbf{V}_{\text{\textit{PR}}}$  &  78.6 (0.38) &  {\color[HTML]{0000FF} \underline{83.78 (4.98)}} &  88.38 (2.09) &  69.26 (0.12)  &  91.77 (1.00) &  74.95 (3.05) &  {\color[HTML]{0000FF} \underline{92.73 (1.44)}} &  75.16 (0.69)  \\ \bottomrule
\end{tabular}
                         
}

\end{table*}


\section{Simple Graph Convolutional Networks}\label{app:sgc}
In Tables \ref{tab:sgc_1} and \ref{tab:sgc_2}, we present the results of our centrality-aware Simple Graph Convolutional Networks \textit{CGSC} of 2 layers. As noticed in most cases, by incorporating our CGSO, we outperform the classical SGC. To also understand the effect of the centrality on the oversmoothing effect, we analyzed the variation of Dirichlet Energy \citep{zhao2024understanding} of \textit{CGSC} across different numbers of layers. As noticed, while the centrality has a lower effect on the oversmoothing in the homophilous dataset Cora, we notice a larger impact on the heterophilious dataset Chameleon.  

\begin{table*}[h] 
\centering
\caption{Classification accuracy ($\pm$ standard deviation) of the models on different benchmark node classification datasets. The higher the accuracy (in \%) the better the model. \textcircled{1} CSGC with nodes centrality,  \textcircled{2} SGC. Highlighted are the \textbf{ best results}.}
\label{tab:sgc_1}
\resizebox{\textwidth}{!}{%
\begin{tabular}{c|l|llllllll}\toprule
\multicolumn{1}{l}{} & Model & CiteSeer  & PubMed & arxiv-year  & chamelon & Cornell  & deezer-europe  & squirrel & Wisconsin  \\ \hline
\multirow{4}{*}{\textcircled{1}}   & CSGC w/ $\mathbf{D}$ &  67.70 (0.17) &  77.37 (0.25) &  35.01 (0.16) &  59.10 (1.66) &  72.70 (4.26) &  58.22 (0.47) &  \textbf{40.12 (1.69)} &  75.88 (4.96)  \\
            & CSGC w/ $\mathbf{V}_{\text{\textit{core}}}$    &  66.85 (0.15) &  \textbf{78.19 (0.12)} &  \textbf{37.71 (0.17)} &  \textbf{63.11 (4.56)} &  72.16 (5.55) &  61.29 (0.50) &  38.66 (2.27) &  75.10 (4.12)  \\
            & CSGC w/ $\mathbf{V}_{\ell\text{\textit{-walks}}}$  &  \textbf{67.09 (0.05)} &  77.50 (0.18) &  36.67 (0.22) &  45.26 (2.51) &  \textbf{74.32 (6.07)} &  59.69 (0.50) &  27.85 (1.38) &  \textbf{81.76 (3.73)} \\
            & CSGC w/ $\mathbf{V}_{\text{\textit{PR}}}$   &  64.91 (0.47) &  76.47 (0.37) &  23.87 (0.51) &  55.18 (3.36) &  69.46 (5.14) &  58.94 (0.48) &  26.73 (2.25) &  75.29 (4.31)\\ \midrule

\multirow{1}{*}{\textcircled{2}}   & SGC  &  64.96 (0.10) &  75.72 (0.12) &  26.61 (0.24) &  38.44 (4.41) &  45.41 (5.77) &  \textbf{62.66 (0.48)} &  19.88 (0.79) &  53.53 (8.09) \\  \bottomrule
\end{tabular}
                         
} 
\end{table*}

\begin{table*}[h] 
\centering
\caption{Classification accuracy ($\pm$ standard deviation) of the models on different benchmark node classification datasets. The higher the accuracy (in \%) the better the model. \textcircled{1} CSGC with nodes centrality,  \textcircled{2} SGC. Highlighted are the \textbf{ best results}.}
\label{tab:sgc_2}
\resizebox{\textwidth}{!}{%
\begin{tabular}{c|l|llllllll}\toprule
\multicolumn{1}{l}{} & Model & Cora  & Texas & Photo  & ogbn-ariv & CS  & Computers  & Physics & Penn94  \\
\hline
\multirow{4}{*}{\textcircled{1}}   & CSGC w/ $\mathbf{D}$  &  \textbf{80.10 (0.11)} &  76.76 (3.24) &  \textbf{89.38 (1.81)} &  \textbf{67.94 (0.06)} &  \textbf{92.29 (1.04)} &  \textbf{79.04 (1.94)} &  \textbf{92.32 (1.2)} &  \textbf{78.84 (4.15) }   \\
            & CSGC w/ $\mathbf{V}_{\text{\textit{core}}}$     &  78.80 (0.17) &  77.30 (3.86) &  88.58 (1.68) &  62.54 (0.16) &  91.82 (1.10) &  76.46 (2.29) &  91.71 (1.63) &  76.25 (1.21)   \\
            & CSGC w/ $\mathbf{V}_{\ell\text{\textit{-walks}}}$  &  77.32 (0.29) &  \textbf{80.27 (5.41)} &  88.78 (2.69) &  66.41 (0.05) &  91.96 (0.84) &  76.17 (4.92) &  91.71 (1.58) &  73.20 (0.36)  \\
            & CSGC w/ $\mathbf{V}_{\text{\textit{PR}}}$  &  76.92 (0.39) &  77.30 (4.86) &  84.33 (3.06) &  44.82 (1.16) &  90.24 (0.86) &  61.51 (2.71) &  91.57 (1.70) &  77.24 (0.67) \\ \midrule

\multirow{1}{*}{\textcircled{2}}   & SGC  &  78.79 (0.13) &  58.65 (4.20) &  24.0 (11.82) &  60.48 (0.14) &  70.78 (5.47) &  11.34 (11.67) &  91.69 (1.48) &  66.63 (0.62) \\ \bottomrule
\end{tabular}
                         
}

\end{table*}

\begin{figure}[h]
    \centering
    \includegraphics[width=\textwidth]{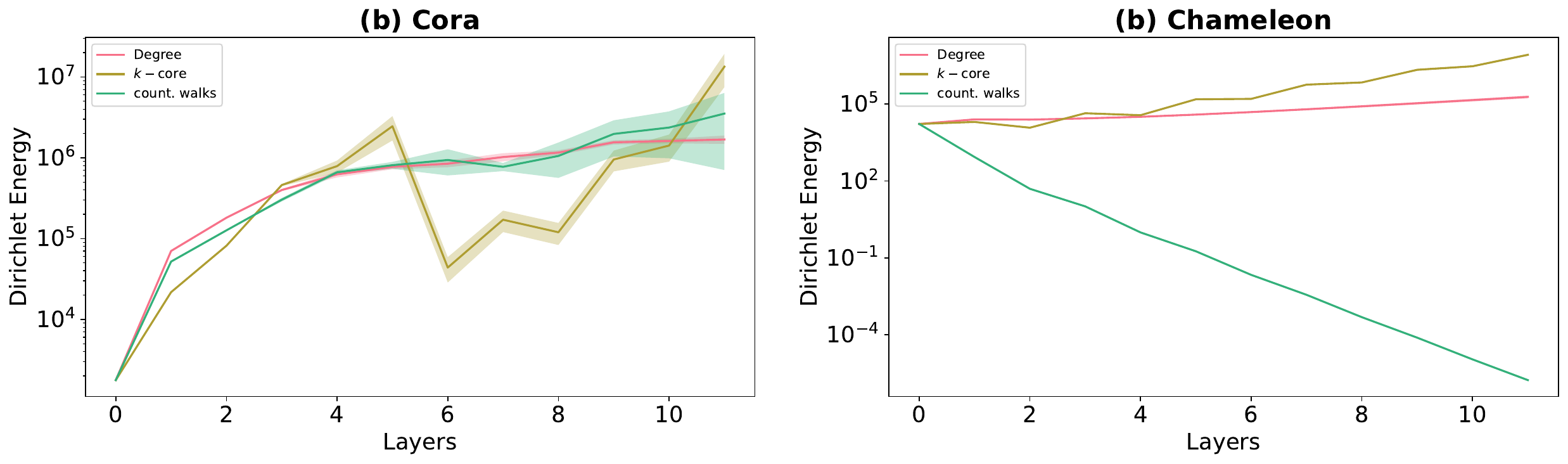}
    \caption{Dirichlet Energy variation with layers in (a) Cora and (b) Chamelon.}
    \label{fig:enter-label}
\end{figure}

\newpage
\section{Combining Local and Global Centralities}\label{combine_centralities}

\begin{table}[h] 
\centering
\caption{Classification accuracy ($\pm$ standard deviation) of the models on different benchmark node classification datasets. The higher the accuracy (in \%) the better the model. }
\label{tab:comb_centralities_2}
\resizebox{\textwidth}{!}{%
\begin{tabular}{l|llllllll}\toprule
 Model & Cora  & Texas & Photo  & ogbn-ariv & CS  & Computers  & Physics & Penn94  \\ \hline

 CGCN w/ $\mathbf{D}$  & 79.45 (0.58) & 81.89 (9.38) & 88.78 (1.74) &69.09 (0.21) & 91.28 (1.29) & \textbf{79.26 (1.87)} & \textbf{92.51 (1.16)} & 73.06 (0.34) \\
             CGCN w/ $\mathbf{V}_{\text{\textit{core}}}$   & 79.80 (0.43) & 77.84 (5.51) & 88.53 (1.40) & 65.54 (0.57) & 91.37 (1.18) & 77.35 (2.67) & 91.98 (1.49) & 78.11 (3.74) \\
             CGCN w/ $\mathbf{V}_{\ell\text{\textit{-walks}}}$  & 79.52 (0.35) & 78.11 (5.82) & 83.72 (2.03) & 22.54 (8.22) & 89.87 (1.20) & 68.56 (3.39) & 89.84 (2.74) & 68.44 (0.37) \\
             CGCN w/ $\mathbf{V}_{\text{\textit{PR}}}$ & 79.51 (15.01) & \textbf{82.70 (4.95)} & 81.28 (6.08) & 68.56 (0.18) & 88.76 (30.68) & 65.54 (6.43) & 89.64 (10.3) & 72.59 (0.84) \\ \midrule

 CGCN w/  $\mathbf{D}~~ \& ~~\mathbf{V}_{\text{\textit{core}}}$     &  \textbf{79.88 (0.38)} &  78.92 (4.32) &  \textbf{89.06 (1.28)} &  67.67 (0.26) &  91.63 (0.95) &  78.41 (1.94) &  91.28 (3.17) &  \textbf{80.28 (2.93)} \\
             CGCN w/ $\mathbf{D}~~ \& ~~\mathbf{V}_{\ell\text{\textit{-walks}}}$ &  79.38 (0.72) &  81.89 (4.69) &  86.78 (2.75) &  \textbf{69.57 (0.24)} &  \textbf{91.78 (1.04)} &  78.39 (2.36) &  91.2 (1.56) &  72.5 (0.48)\\
             CGCN w/ $\mathbf{D}~~ \& ~~\mathbf{V}_{\text{\textit{PR}}}$  &  79.84 (0.4) &  78.11 (2.55) &  82.76 (2.06) &  21.28 (9.89) &  90.04 (0.57) &  65.66 (4.96) &  90.24 (1.86) &  71.03 (5.83) \\  \bottomrule
\end{tabular}
                         
}

\end{table}

\section{The Graph Structure of the Stochastic Block Barabási–Albert Models}\label{app:_combined_BA} 
In Figure \ref{fig:adj_matrix_combined_graph}, we give an example of an adjacency matrix sampled from SBBAM model, presented previously in Section \ref{sec:SBBAM}. Yellow points indicate edges, while purple points represent non-edges. Notably, there are variations in edge density across different blocks: while the first block appears to be predominantly heterophilic, the third block inherits a homophilic cluster structure.

\begin{figure}[h]
    \centering
    \includegraphics[width=0.5\textwidth]{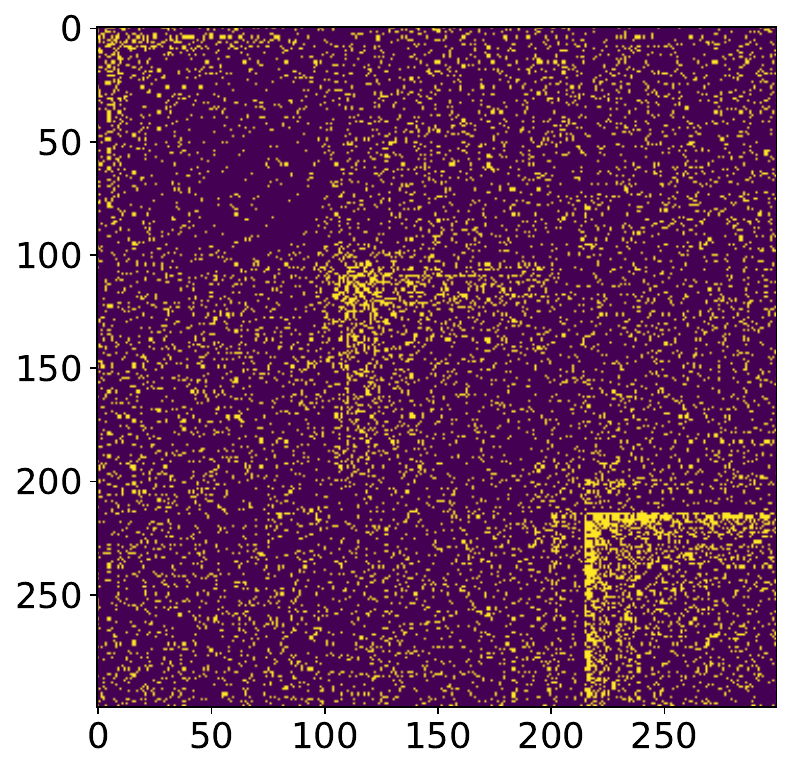}
  \caption{The adjacency matrix of the synthetic graph generated from an SBBAM with 3 blocks. }
  \label{fig:adj_matrix_combined_graph}
\end{figure}

\newpage
\section{$k$-core distribution in Stochastic Block Barabási–Albert Models}
\label{appendix_k_core_distrib} 
In Figure \ref{fig:sbm-pam}, we illustrate the $k$-core distribution of the three individual blocks, each corresponding to BA graphs, and the combined SBBAM graph.

\begin{figure*}[h]
    \centering

    \includegraphics[width=\textwidth]{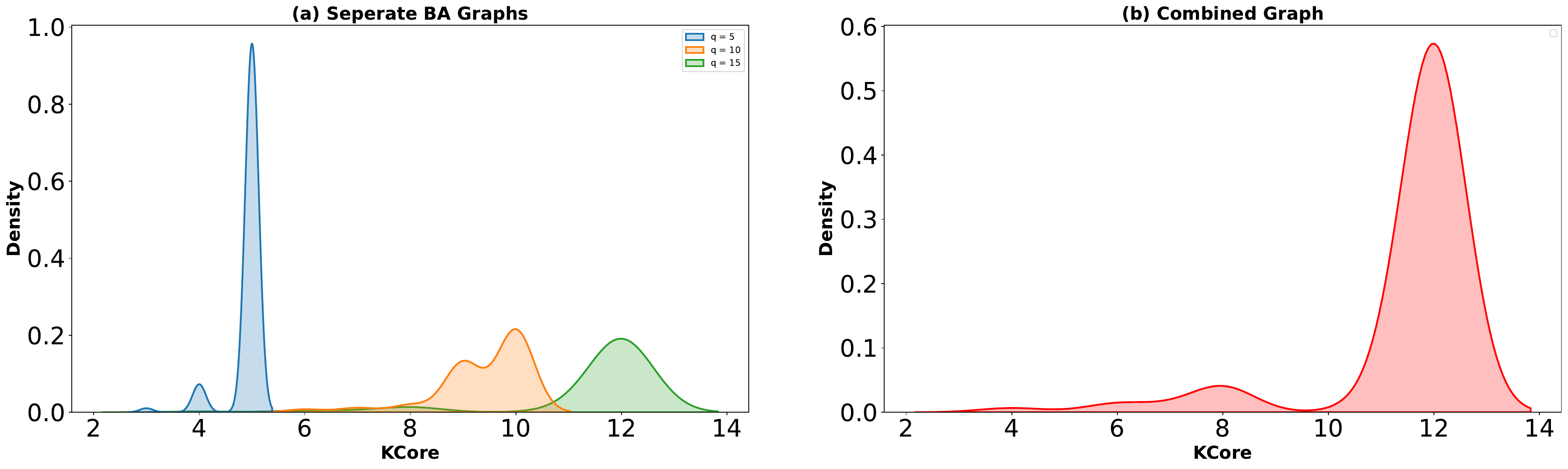} 
    \caption{The left figure represents the $k$-core distributions of three different BA models with the hyperparameters $r=5,10$ and $15$ serving as blocks of of our SBBAM. The right figure represents the $k$-core distribution of the SBBAM. }
    \label{fig:sbm-pam}
\end{figure*}

\section{Spectral Clustering Algorithm}
\label{app:spectral_clustering_algo}

 In Algorithm \ref{algo:spectral_clustering}, we outline the steps of the Spectral Clustering Algorithm using CGSOs. This algorithm is applied on SBBAMs for cluster recovery and on the Cora dataset for centrality recovery.
\begin{algorithm}[h]

\small
\textbf{Inputs: } Graph $G$, Centrality GSO $\Phi$, Number of clusters to retrieve $C.$\\
\begin{enumerate}
    \item Compute the eigenvalues $\{\lambda \}_{i=1}^n$ and eigenvectors $\{u \}_{i=1}^n$ of $\mathbf{\Phi}$;
    \item Consider only the eigenvectors $U\in \mathbb{R}^{N\times C}$ corresponding to the $C$ largest eigenvalues;
    \item Cluster rows of $U$, corresponding to nodes in the graph,  using the $K$-Means algorithm to retrieve a node partition $P$ with $C$ clusters;
    $$\mathcal{P} = \text{K-Means}(U, C ) $$
\end{enumerate}

\Return $\mathcal{P}$; 
\caption{Spectral Clustering using the Centrality GSOs}\label{algo:spectral_clustering}

\end{algorithm}

\section{Additional Results for the Spectral Clustering Task}\label{ari_results_appendix} 
In this section, we report the ARI value of the spectral clustering task described in Section \ref{synth_exps}.

\begin{figure}[h]
    \centering
    \includegraphics[width=\textwidth]{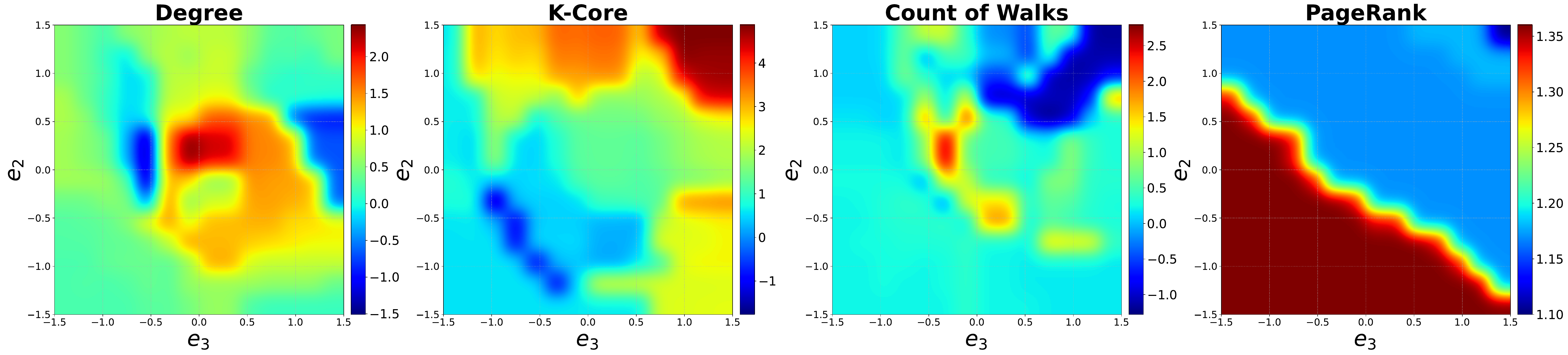}
    \caption{Result for the spectral clustering task on Cora graph with core numbers considered as clusters. We report the values of the Adjusted Rand Information (ARI) in \% different combination of the exponents $(e_2,e_3)$ in $\mathbf{V}^{e_2} \mathbf{A} \mathbf{V}^{e_3}.$}    
    \label{fig:ari_cora}
\end{figure}

\section{CGNN with Heterophily}\label{app:heterop_gnn}
In this section, we incorporate our learnable CGSOs into \textit{H2GCN} \cite{zhu2020beyond}, designed for heterophilic graphs. We compared the results of \textit{CH2GCN} and \textit{H2GCN} on datasets with low homophily. We report the results of this experiment in Table \ref{tab:H2GCN}. As noticed, our \textit{CH2GCN} outperforms \textit{H2GCN}.

\begin{table*}[h] 
\centering
\caption{Classification accuracy ($\pm$ standard deviation) of the models on different benchmark node classification datasets. The higher the accuracy (in \%) the better the model. \textcircled{1} CH2GCN with nodes centrality,  \textcircled{2} H2GCN. Highlighted are the \textbf{ best results}.}
\label{tab:H2GCN}
\resizebox{0.8\textwidth}{!}{%
\begin{tabular}{c|l|llll}\toprule
\multicolumn{1}{l}{} & Model & Texas  & Cornell & Wisconsin   & chameleon   \\
\hline
\multirow{4}{*}{\textcircled{1}}   & CH2GCN w/ $\mathbf{D}$  &  \textbf{79.73 (5.02)} &  68.65 (5.16) &  79.80 (4.02) &      \textbf{67.89 (4.23)}  \\
            & CH2GCN w/ $\mathbf{V}_{\text{\textit{core}}}$    &  78.92 (5.77) &  \textbf{68.92 (7.28)} &  \textbf{79.80 (3.40)}     &  60.00 (5.63)  \\
            & CH2GCN w/ $\mathbf{V}_{\ell\text{\textit{-walks}}}$   &  78.11 (6.10) &  68.92 (6.19) &  82.35 (5.04) &    44.28 (2.32)\\
            & CH2GCN w/ $\mathbf{V}_{\text{\textit{PR}}}$  &  60.27 (5.41) &  44.86 (7.76) &  52.35 (7.75) &    31.95 (5.79)   \\ \midrule

\multirow{1}{*}{\textcircled{2}}   & H2GCN    &  56.76 (6.73) &  51.08 (6.89) &  55.29 (5.10) &    63.93 (2.07)    \\   \bottomrule
\end{tabular}
                         
}

\end{table*}

\section{Proofs of Propositions}\label{app_proof_spectral}
In this section, we details the proofs of the propositions \ref{prop:spectral_properties}, \ref{prop:eigenvalues} and \ref{prop:cheeger}.
\subsection{Proof of Proposition \ref{prop:spectral_properties}}


\begin{proof}[Proof of Proposition \ref{prop:spectral_properties}]
We first prove that the operator $\mathbf{M}_{G}$ is self-adjoint. \\
For $ \varphi_1, \varphi_2 \in L^2(G)$, we have:
\begin{align*}
    <\mathbf{M}_{G} \varphi_1, \varphi_2>_{G} & = \sum_{i\in  \mathcal{V} } v(i) \left (  \mathbf{M}_{G} \varphi_1 \right )(i)  \bar{\varphi_2}(i) \\
    &  = \sum_{i\in  \mathcal{V} } v(i) \left ( \frac{1}{v(i)} \sum_{j \in \mathcal{N}_i} \varphi_1(j)  \right ) \bar{\varphi_2}(i)  \\
    & = \sum_{i\in  \mathcal{V} } \bar{\varphi_2}(i) \left (  \sum_{j \in \mathcal{N}_i} \varphi_1(j)  \right )  \\
    & =  \sum_{i\in  \mathcal{V} }   \sum_{j \in \mathcal{N}_i} a_{i,j}  \bar{\varphi_2}(i)  \varphi_1(j) \\
    & =  \sum_{i,j\in  \mathcal{V} }   a_{i,j}  \bar{\varphi_2}(i)  \varphi_1(j). \\
\end{align*}

Similarly, we also have that, 
\begin{align*}
    <\varphi_1, \mathbf{M}_{G}  \varphi_2>_{G} & = \sum_{i\in  \mathcal{V} } v(i)   \varphi_1(i) \overline{\left (  \mathbf{M}_{G} \varphi_2 \right )}(i) \\
    &  =  \sum_{i\in  \mathcal{V} } v(i)   \varphi_1(i) \overline{  \left ( \frac{1}{v(i)} \sum_{j \in \mathcal{N}_i} \varphi_2(j)  \right )   }\\
 &  =  \sum_{i\in  \mathcal{V} }  \varphi_1(i) \overline{  \left ( \sum_{j \in \mathcal{N}_i} \varphi_2(j)  \right )   }\\
 & =  \sum_{i,j\in  \mathcal{V} }   a_{i,j}  \bar{\varphi_2}(i)  \varphi_1(j) \\
 & = \sum_{i,j\in  \mathcal{V} }   a_{j,i}  \bar{\varphi_2}(j)  \varphi_1(i) .
\end{align*}
\\

Thus, 

\begin{equation}
\forall \varphi_1, \varphi_2 \in L^2(G), \left\{ \begin{array}{l}
                                 <\mathbf{M}_{G} \varphi_1, \varphi_2>_{G} =  \sum_{i,j\in  \mathcal{V} }   a_{i,j}  \bar{\varphi_2}(i)  \varphi_1(j) , \\ 
                               <\varphi_1, \mathbf{M}_{G}  \varphi_2>_{G} = \sum_{i,j\in  \mathcal{V} }  a_{j,i}  \bar{\varphi_2}(j)  \varphi_1(i). \\ 
                                    \end{array} \right.
    \label{eq:inner_prodcut}
\end{equation}
Since $a_{i,j}  = a_{j,i} $, we conclude that  $\mathbf{M}_{G}$ is self-adjoint, i.e. 
$$<\mathbf{M}_{G} \varphi_1, \varphi_2>_{G} = <\varphi_1, \mathbf{M}_{G}  \varphi_2>_{G}. $$

$\mathbf{M}_{G}$ is self-adjoint, the space $L^2(G)$ is finite-dimensional, thus is diagonalizable in an orthonormal basis, and its eigenvalues are real.\\
We define the following norm,  \begin{equation*}
     \lVert \mathbf{M}_{G} \rVert = \sup_{\varphi \neq 0} \frac{\langle \mathbf{M}_{G} \varphi, \varphi \rangle_{G}}{\lVert \varphi \rVert ^2}.   
    \end{equation*}
We will now prove that all eigenvalues have absolute values at most $\gamma = \min_{i \in \mathcal{V} } v(i)/deg(i) $. For that, we will first compute the two inner-products $<\left ( I - \mathbf{M}_{G} \right )  \varphi, \varphi>_{G} $ and $<\left ( I + \mathbf{M}_{G} \right )  \varphi, \varphi>_{G} $. For any $ \varphi \in L^2(G)$, using \eqnref{eq:inner_prodcut}, we have that: 

$$\left\{ \begin{array}{l}
< \varphi, \varphi>_{G} =  \sum_{i \in \mathcal{V}} v(i) |\varphi(i)|^2 , \\ 
< \mathbf{M}_{G}  \varphi, \varphi>_{G} = \sum_{i,j\in  \mathcal{V}} a(i,j) \bar{\varphi}(i)  \varphi(j). \\ 
                                    \end{array} \right.$$

Let's first take the simple case, where $\gamma = \min_{i \in \mathcal{V} } \left ( \frac{v(i)}{deg(i)} \right ) \leq 1$, 
then,

\begin{align*}
   2 < \varphi, \varphi>_{G} & =  2\sum_{i \in  \mathcal{V}} v(i) |\varphi(i)|^2 \\
    & \geq 2  \gamma \sum_{i \in  \mathcal{V}}  deg(i) |\varphi(i)|^2   \\
    & \geq 2 \sum_{i \in  \mathcal{V}}  deg(i) |\varphi(i)|^2   \\
    & \geq 2 \sum_{i \in  \mathcal{V}}  \left ( \sum_{j \in  \mathcal{V}} a(i,j)  \right ) |\varphi(i)|^2   \\
    & \geq 2 \sum_{i,j \in  \mathcal{V}}   a(i,j)   |\varphi(i)|^2  . \\
\end{align*}

Therefore, 
\begin{align*}
   2 <(\mathbf{I}-\mathbf{M}_{G}) \varphi, \varphi>_{G} & =   2 <\varphi, \varphi>_{G} -  2 <\mathbf{M}_{G} \varphi, \varphi>_{G} \\
    & \geq 2 \sum_{i,j \in  \mathcal{V}}   a(i,j)   |\varphi(i)|^2   - 2 \sum_{i,j\in  \mathcal{V}} a(i,j) \bar{\varphi}(i)  \varphi(j) \\
    & \geq \sum_{i,j \in  \mathcal{V}}   a(i,j)   |\varphi(i)|^2 + \sum_{i,j \in  \mathcal{V}}   a(i,j)   |\varphi(j)|^2   - 2 \sum_{i,j\in  \mathcal{V}} a(i,j) \bar{\varphi}(i)  \varphi(j) \\
    & \geq \sum_{i,j \in  \mathcal{V}}   a(i,j)   |\varphi(i) - \varphi(j)|^2 .\\
\end{align*}

Similarly, we can prove that, 

$$ 2 <(\mathbf{I}+\mathbf{M}_{G}) \varphi, \varphi>_{G} \geq \sum_{i,j \in  \mathcal{V}}   a(i,j)   |\varphi(i) + \varphi(j)|^2 .$$

Therefore, if $\phi \neq 0$, then, 
\begin{align*}
\left\{ \begin{array}{l}
<(\mathbf{I}-\mathbf{M}_{G}) \varphi, \varphi>_{G} \geq 0  , \\ 
<(\mathbf{I}+\mathbf{M}_{G}) \varphi, \varphi>_{G} \geq 0  \\ 
\end{array} \right. & \Rightarrow  < \varphi, \varphi>_{G} \leq  <\mathbf{M}_{G} \varphi, \varphi>_{G} \leq  < \varphi, \varphi>_{G} .\\ 
& \Rightarrow  \frac{|<\mathbf{M}_{G} \varphi, \varphi>_{G} |}{< \varphi, \varphi>_{G}} \leq 1.
\end{align*}

Thus, $\lVert \mathbf{M}_{G} \rVert \leq 1$, i.e. all the eigenvalues have absolute values at most 1.
Let now consider the general case, where $\gamma$ is not necessarily smaller than 1.
Let's consider $\widetilde{V} = \frac{1}{\gamma} V=diag( \frac{v(1)}{\gamma}, \ldots , \frac{v(N)}{\gamma})$. Since,

\begin{align*}
   \Tilde{\gamma } & = \min_{i \in \mathcal{V} } \left ( \frac{\Tilde{v(i)}}{deg(i)} \right )\\
   & = \frac{1}{\gamma }  \left ( \frac{v(i)}{deg(i)} \right ) \\
   & = \frac{\gamma}{\gamma}\\
   & =1.
\end{align*}
Therefore, all the eigenvalues of $\widetilde{\mathbf{M}}_{G} = \widetilde{V}^{-1}A = \frac{1}{\gamma}\mathbf{V}^{-1}\mathbf{A} =\frac{1}{\gamma} \mathbf{M}_{G} $ have absolute values at most 1. Thus, all the eigenvalues of $ \mathbf{M}_{G}$ have absolute values at most $\gamma$. 
\end{proof}

\subsection{Proof of Proposition \ref{prop:eigenvalues}}

    
\begin{proof}[Proof of Proposition \ref{prop:eigenvalues}]

We will prove the first property.\\
We consider $P$ as the number of connected components, i.e. $G = \bigcup_{i=1}^{P} \mathcal{C}_i$.

The adjacency matrix of the graph $G$ is,
$$A = \begin{bmatrix}
A_{\mathcal{C}_1}  &     & \text{\LARGE 0}& &  &\text{\LARGE 0} \\
 &  & \ddots  & & & \\
   & \text{\LARGE 0}   &  &    A_{\mathcal{C}_i } &  & \text{\LARGE 0} &  \\
   &   &  &    &  & \ddots&   \\
   &   \text{\LARGE 0}&  &    &\text{\LARGE 0}  & & A_{ \mathcal{C}_P}\\
\end{bmatrix}.$$
And the transformation of $A$ by the Markov Average operator $\mathbf{M}_{G}$ is,
$$\mathbf{M}_{G} = \begin{bmatrix}
M_{\mathcal{C}_1}  &     & \text{\LARGE 0}& &  &\text{\LARGE 0} \\
 &  & \ddots  & & & \\
   & \text{\LARGE 0}   &  &    M_{\mathcal{C}_i } &  & \text{\LARGE 0} &  \\
   &   &  &    &  & \ddots&   \\
   &   \text{\LARGE 0}&  &    &\text{\LARGE 0}  & & M_{ \mathcal{C}_P}\\
\end{bmatrix}.$$

According to Proposition \ref{prop:spectral_properties}, for each connected component $\mathcal{C}_i$, the matrix $\mathbf{M}_{\mathcal{C}_i}$ is diagonalizable in an orthonormal basis, and its eigenvalues are real numbers. We denote by $\mathbf{e^{\mathcal{C}_i} = [e_1^{\mathcal{C}_i},\ldots, e_{|\mathcal{C}_i|}^{\mathcal{C}_i} ]}$ the eigenvectors basis of $M_{\mathcal{C}_i}$ corresponding the eigenvalues $\lambda^{\mathcal{C}_i} = [\lambda_1^{\mathcal{C}_i},\ldots, \lambda_{|\mathcal{C}_i|}^{\mathcal{C}_i} ]$.

We consider the set of vectors

$$\mathbf{e}  =\begin{bmatrix}
\mathbf{e^{\mathcal{C}_1}} &     & \text{\LARGE 0}& &  &\text{\LARGE 0} \\
 &  & \ddots  & & & \\
   & \text{\LARGE 0}   &  &   \mathbf{e^{\mathcal{C}_i}} &  & \text{\LARGE 0} &  \\
   &   &  &    &  & \ddots&   \\
   &   \text{\LARGE 0}&  &    &\text{\LARGE 0}  & &  \mathbf{e^{\mathcal{C}_P}}\\
\end{bmatrix}.$$
The column vectors of $\mathbf{e} $ are eigenvectors of the matrix $\mathbf{M}_{G}$, and which achieves the conditions of Property 1.
Let's now prove the formulas of the mean and standard deviation of the $\mathbf{M}_{G}$ spectrum. The matrix $\mathbf{V}^{-1}\mathbf{A}$ is defined as follow, 
$$\forall 1\leq i,j \leq N, ~~~~(\mathbf{D}^{-1}\mathbf{A})_{i,j} = \frac{1}{v(i)} A_{i,j}.$$
Therefore, the diagonal elements of the matrix $ \left ( \mathbf{D}^{-1}\mathbf{A} \right )^2$ is defined as follow,
\begin{align*}
  \forall 1\leq i\leq N, ~   \left ( \left ( \mathbf{D}^{-1}\mathbf{A} \right )^2 \right )_{i,i} & =   \left ( \mathbf{D}^{-1}\mathbf{A}\mathbf{D}^{-1}\mathbf{A}\right )_{i,i} \\ 
  &=    \sum_j (\mathbf{D}^{-1}\mathbf{A})_{i,k} (\mathbf{D}^{-1}\mathbf{A})_{k,i} \\
     &= \sum_j \frac{ A_{i,j} A_{j,i}}{v(i) \times  v(j)}  \\
  &= \sum_j \frac{ A_{i,j}^2 }{v(i)\times v(j)} \\
   &= \sum_{j \in \mathcal{N}_i} \frac{ 1 }{v(i) \times v(j)} .
\end{align*}
Thus, 
\begin{align*}
  \mu\left (  {\mathbf{M}_{G}}  \right )& =    \text{Mean}\left (  Spectrum  \left [  \mathbf{V}^{-1}\mathbf{A}  \right ] \right )  \\
  & = \frac{1}{N} Sum \left (  Spectrum  \left [  \mathbf{V}^{-1}\mathbf{A}  \right ] \right )  \\
  & = \frac{1}{N} \sum_{i=1}^N (\mathbf{D}^{-1}\mathbf{A})_{i,i} \\
  & = \frac{1}{N} \sum_{i=1}^N \frac{1}{v(i)} A_{i,i} \\
  & = \frac{1}{N} \sum_{i=1}^N \frac{1}{v(i)}, \\
\end{align*}
 and,
\begin{align*}
  \sigma\left (  {\mathbf{M}_{G}} \right )   & =   \text{Stdev}\left (  Spectrum  \left [  \mathbf{V}^{-1}\mathbf{A}  \right ] \right )  \\
   & = \sqrt{\frac{1}{N}  \sum_{\lambda \in Spectrum  \left [  \mathbf{V}^{-1}\mathbf{A}  \right ] } \left (  \lambda  - \text{Mean}\left (  sp_\phi  \right )  \right )^2   }  \\ 
& = \sqrt{ \left ( \frac{1}{N}  \sum_{\lambda \in Spectrum  \left [  \mathbf{V}^{-1}\mathbf{A}  \right ] }  \lambda^2 \right )  - \text{Mean}\left (  sp_\phi  \right )^2   }  \\ 
& = \sqrt{ \left ( \frac{1}{N}  Sum(Spectrum\left [  \phi^2\right ]) \right )  - \text{Mean}\left (  sp_\phi  \right )^2   } \\ 
& = \sqrt{ \left ( \frac{1}{N}  Sum(Spectrum\left [  \left ( \mathbf{D}^{-1}\mathbf{A} \right )^2 \right ]) \right )  - \text{Mean}\left (  sp_\phi  \right )^2   } \\ 
& = \sqrt{ \left ( \frac{1}{N} Tr\left [  \left ( \mathbf{D}^{-1}\mathbf{A} \right )^2 \right ] \right )  - \text{Mean}\left (  sp_\phi  \right )^2   } \\ 
& = \sqrt{ \left ( \frac{1}{N} \sum_{i=1}\sum_{j\in \mathcal{N}_i} \frac{1}{v(i) \times v(j) }    \right )  - \text{Mean}\left (  sp_\phi  \right )^2   } \\ 
   & = \sqrt{ \left ( \frac{1}{N} \sum_{(i,j) \in E} \frac{1}{v(i) \times v(j) }    \right )  - \text{Mean}\left (  sp_\phi  \right )^2   }.
\end{align*}
 
\end{proof}

    \subsection{Proof of Proposition \ref{prop:cheeger}}

    
    \begin{proof}[Proof of Proposition \ref{prop:cheeger}]
        Let $W \subset V$, such that $|W|\leq \frac{1}{2} |\mathcal{V}|.$
    
    For $\varphi = \mathbb{1}_W - \mu_{G}(W)$ where $\mu_{G}(W) = \frac{|W|}{N_v}$ and $N_v= \sum_{i\in  \mathcal{V}} v(i) = |\mathcal{V}|_v.$
    \begin{align*}
       2 <(\mathbf{I}-\mathbf{M}_{G}) \varphi, \varphi>_{G} & =   2 <\varphi, \varphi>_{G} -  2 <\mathbf{M}_{G} \varphi, \varphi>_{G} \\
        & \leq 2 \sum_{i \in  \mathcal{V}}  v(i)   |\varphi(i)|^2   - 2 \sum_{i,j\in  \mathcal{V}} a(i,j) \bar{\varphi}(i)  \varphi(j) \\
        & \leq 2 \sum_{i \in  \mathcal{V}}  \beta \times deg(i)   |\varphi(i)|^2   - 2 \sum_{i,j\in  \mathcal{V}} a(i,j) \bar{\varphi}(i)  \varphi(j) \\
        & \leq 2 \sum_{i \in  \mathcal{V}}   deg(i)   |\varphi(i)|^2   - 2 \sum_{i,j\in  \mathcal{V}} a(i,j) \bar{\varphi}(i)  \varphi(j) \\
        & \leq 2 \sum_{i \in  \mathcal{V}}   \left ( \sum_{j \in  \mathcal{V}} a(i,j)  \right )   |\varphi(i)|^2   - 2 \sum_{i,j\in  \mathcal{V}} a(i,j) \bar{\varphi}(i)  \varphi(j) \\
        & \leq 2 \sum_{i,j \in  \mathcal{V}}   a(i,j)   |\varphi(i)|^2   - 2 \sum_{i,j\in  \mathcal{V}} a(i,j) \bar{\varphi}(i)  \varphi(j) \\
        & \leq \sum_{i,j \in  \mathcal{V}}   a(i,j)   |\varphi(i)|^2 + \sum_{i,j \in  \mathcal{V}}   a(i,j)   |\varphi(j)|^2   - 2 \sum_{i,j\in  \mathcal{V}} a(i,j) \bar{\varphi}(i)  \varphi(j) \\
        & \leq \sum_{i,j \in  \mathcal{V}}   a(i,j)   |\varphi(i) - \varphi(j)|^2 \\
    & \leq \sum_{i,j \in  \mathcal{V}}   a(i,j)   |\mathbb{1}_W (i) - \mathbb{1}_W (j)|^2. \\
    \end{align*}
    
    The non-zero terms in $\sum_{i,j \in  \mathcal{V}}   a(i,j)   |\varphi(i) - \varphi(j)|^2$ are those where $i$ and $j$ are adjacent, but one of them is in $W$ and the other not. 
    
    \begin{align*}
       <(\mathbf{I}-\mathbf{M}_{G}) \varphi, \varphi>_{G} &  \leq \frac{1}{2} \sum_{i,j \in  \mathcal{V}}   a(i,j)   |\mathbb{1}_W (i) - \mathbb{1}_W (j)|^2 \\
        &  = \#\mathcal{E}(W).\\
    \end{align*}
    There $\frac{1}{2}$ was removed because of the symmetry. 
    We also have that, 
    $$\frac{1}{N_v}<\mathbb{1}_W , \mathbb{1}_W >_{G} = \frac{1}{N_v} \sum_{i \in  \mathcal{V}} v(i) = \mu_G(W),$$
    and,
  $$\frac{1}{N_v}<\mathbb{1}_W , \mu_G(W)>_{G} = \frac{1}{N_v} \sum_{i \in  \mathcal{V}} v(i) \mu_G(W)=  \left ( \mu_G(W) \right )^2,$$   
  Therefore, 
  \begin{align*}
    \frac{1}{N_v}<\varphi , \varphi>_{G} & =  \frac{1}{N_v}< \mathbb{1}_W - \mu_{G}(W) ,  \mathbb{1}_W - \mu_{G}(W)>_{G}  \\
    & =  \frac{1}{N_v}< \mathbb{1}_W ,  \mathbb{1}_W - \mu_{G}(W)>_{G} - \frac{1}{N_v}<\mu_{G}(W) ,  \mathbb{1}_W - \mu_{G}(W)>_{G}  \\
    & = \frac{1}{N_v}<\mathbb{1}_W , \mathbb{1}_W >_{G} - \frac{2}{N_v} < \mathbb{1}_W  ,  \mu_{G}(W)>_{G}  +\frac{1}{N_v} < \mu_{G}(W) ,   \mu_{G}(W)>_{G} \\
    & = \mu_G(W) - 2  \left ( \mu_G(W) \right )^2 + \left ( \mu_G(W) \right )^2 \frac{1}{N_v}<1 , 1>_{G} \\
    & =  \mu_G(W) - 2  \left ( \mu_G(W) \right )^2 + \left ( \mu_G(W) \right )^2 \frac{1}{N_v} \sum_{i \in  \mathcal{V}} v(i) \\
    & =  \mu_G(W) - 2  \left ( \mu_G(W) \right )^2 + \left ( \mu_G(W) \right )^2 \frac{N_v}{N_v}  \\
    & = \mu_G(W) -   \left ( \mu_G(W) \right )^2  \\
    & = \mu_G(W)  \left (1- \mu_G(W) \right )  \\
    & = \mu_G(W)\mu_G(W' ),
\end{align*}
where $W' = \mathcal{V} - W.$\\
By definition,
$$\lambda_1(G) = \min_{\Tilde{\varphi}\neq 0} \frac{<(\mathbf{I}-\mathbf{M}_{G})\Tilde{\varphi} , \Tilde{\varphi}>_{G}}{<\Tilde{\varphi} , \Tilde{\varphi}>_{G}}.$$
Therefore,
\begin{align*}
    \lambda_1(G)  & \leq \frac{<(\mathbf{I} - \mathbf{M}_{G})\varphi , \varphi>_{G}}{<\varphi , \varphi>_{G}} \\
    & \leq \frac{\#\mathcal{E}(W)}{N_v} \frac{N_v }{<\varphi , \varphi>_{G}} \\
    & \leq \frac{\#\mathcal{E}(W)}{N_v} \frac{N_v }{  \mu_G(W)\mu_G(W' ) } .
\end{align*}
Since,
$$\frac{v_{-}}{v_{+}} \frac{|W|_v}{|\mathcal{V}|_v} \leq\mu_G(W) \leq \frac{v_{-}}{v_{+}} \frac{|W|_v}{|\mathcal{V}|_v}, $$
then,
\begin{align*}
N_v   \mu_G(W)\mu_G(W' ) & \geq N_v \frac{|W|_v}{|\mathcal{V}|_v} \frac{v_{-}}{v_{+}} \frac{|W'|_v}{|\mathcal{V}|_v} \\
& \geq  \frac{\sum_{i\in\mathcal{V} } v(i)}{|\mathcal{V}|_v} |W|_v \frac{v_{-}}{v_{+}} \frac{|W'|_v}{|\mathcal{V}|_v} \\
& \geq  \frac{\sum_{i\in\mathcal{V} } v(i)}{\sum_{i\in\mathcal{V} } v(i)} |W|_v \frac{v_{-}}{v_{+}} \frac{|W'|_v}{|\mathcal{V}|_v} \\
& \geq  |W|_v \frac{v_{-}}{v_{+}} \frac{|W'|_v}{|\mathcal{V}|_v} \\
& \geq  \frac{v_{-}}{v_{+}}|W|_v\frac{|W'|_v}{|\mathcal{V}|_v}  \\
& \geq \frac{v_{-}}{2v_{+}}|W|_v,
\end{align*}
because 
$$ \left\{\begin{matrix}
|W|_v\leq \frac{1}{2} |\mathcal{V}|_v\\ 
W' = \mathcal{V}-W
\end{matrix}\right. \Rightarrow  |W'|_v\geq \frac{1}{2} |\mathcal{V}|_v.$$
Thus, 
$$\forall W\subset \mathcal{V}, |W|_v\leq \frac{1}{2} |\mathcal{V}|_v \Rightarrow  \lambda_1(G) \leq \frac{2v_{+}}{v_{-}} \frac{\#\mathcal{E}(W)}{|W|_v}.$$

Thus, 
$$ \lambda_1(G) \leq \frac{2v_{+}}{v_{-}}  N_v h(G) \leq 2 N \frac{ v_{+}^2}{v_{-}} h_v(G).$$

    \end{proof}

\section{Average Degree of a Barabasi–Albert Model}\label{appendix:proof_lemma_BA}
In this section, we details the proofs of the propositions \ref{lem:BA_average_degree}.
\begin{proof}[Proof of Proposition \ref{lem:BA_average_degree}]
    We start with a small graph of $N_0$ nodes and $r_0$ edges. At each time step, we increase the number of edges by $r$. Thus, if $N$ is the number of nodes at a certain time step, then there are exactly $r_0 + r(N-N_0)$ edges.

    As each edge contributes to the degree of two nodes, thus, the average degree is twice the number of edges divided by the number of nodes $N$. Therefore,
    \begin{align*}
        \overline{deg}(G^{BA}) & = \frac{2}{N} \left ( r_0 + r(N-r_0) \right ) \\
        & = 2r + 2\frac{r_0}{N} - 2N_0 \frac{r }{N} .
    \end{align*}
\end{proof}

\section{Learned Parameters of Different Centrality Based GSOs} \label{appendix:learned_params}
In this section, we present some graph properties of the used dataset. We specifically present the node density, the homophily coefficient as well as the average value of different centrality metrics in Table \ref{tab:graph_prop}. We also present the $(m_1,m_2,m_3,e_1,e_2,e_3,a)$ learned by the GNN in Tables \ref{tab:degree_hyper}, \ref{tab:kcore_hyper}, \ref{tab:pagerank_hyper} and \ref{tab:count_walks_hyper}.

\begin{table}[h]
\centering
\caption{Detailed graph properties of the used datasets.}
\label{tab:graph_prop}
\resizebox{\textwidth}{!}{%
\begin{tabular}{l|rrrrrr}\toprule
Dataset  &  \multicolumn{1}{l}{density}  &  \multicolumn{1}{l}{Avg. Degree} &  \multicolumn{1}{l}{Avg. PageRank} &  \multicolumn{1}{l}{Avg. K-core}  & \multicolumn{1}{l}{Avg. Count. Paths} & \multicolumn{1}{l}{homophily}   \\ \hline
   Physics & $4.16\times 10^{-4}$ & $14.37$ & $2.89 \times 10^{-5}$ & $7.71$ & $449.22$ & $0.931$ \\
Photo & $4.07\times 10^{-3}$ & $31.13$ & $1.30\times 10^{-4}$ & $16.97$ & $3204.098$ & $0.827$ \\
Cora & $1.43\times 10^{-3}$ & $3.89$ & $3.69\times 10^{-4}$ & $2.31$ & $42.52$ & $0.809$  \\
CS & $4.87\times 10^{-4}$ & $8.93$ & $5.45 \times 10^{-5}$ & $4.94$ & $162.75$ & $0.808$  \\
PubMed & $2.28\times 10^{-4}$ & $4.49$ & $5.07 \times 10^{-5}$ & $2.39$ & $75.43$ & $0.802$  \\
Computers & $2.60\times 10^{-3}$ & $35.75$ & $7.27 \times 10^{-5}$ & $18.84$ & $6221.39$ & $0.777$ \\
CiteSeer & $8.22\times 10^{-4} $& $2.73 $& $3.00\times 10^{-4}$ & $1.73$ & $18.91$ & $0.735$ \\
ogbn-arxiv & $8.07 \times 10^{-5}$ & $13.67$ & $5.90 \times 10^{-6}$ & $7.13$ & $4898.16$ & $0.654$  \\
deezer-europe & $2.31\times 10^{-4}$ & $6.55$ & $3.53 \times 10^{-5}$  & $3.57$ & $106.16$ & $0.525$  \\
Penn94 & $1.57\times 10^{-3}$ & $65.56$ & $2.40 \times 10^{-5}$ & $33.68$ & $10662.08 $& $0.470$ \\
chameleon & $1.21\times 10^{-2}$  &$ 27.57$ & $4.39\times 10^{-4}$ & $16.60 $ & $2913.48$ & $0.231$  \\
squirrel & $1.46\times 10^{-2}$  & $76.30$ &$ 1.92\times 10^{-4}$ &$ 41.55$ & $31888.02$ & $0.222$ \\
arxiv-year & $8.07 \times 10^{-5}$  & $6.88 $&$ 5.90 \times 10^{-6}$ & $7.13$ & $82.85$ & $0.218 $ \\
Wisconsin & $1.48\times 10^{-2}$  & $3.64$ & $3.98\times 10^{-3}$ & $2.05$ & $76.26$ & $0.206$ \\
Cornell & $1.68\times 10^{-2} $ & $3.04 $& $5.46\times 10^{-3}$ & $1.74$ & $58.47$ & $0.132 $ \\
Texas & $1.77\times 10^{-2}$  & $3.13$ & $5.46\times 10^{-3}$ &$ 1.71$ & $70.72$ & $0.111$     
  \\ \hline      
\end{tabular}
}
\end{table}

\newpage
\subsection{Degree Centrality}
\begin{table}[h]
\centering
\caption{Graph Properties of the used datasets and the corresponding learned hyperparameters in GAGCN w/ Degree}
\label{tab:degree_hyper}
\resizebox{\textwidth}{!}{%
\begin{tabular}{l|rrr|rrrrrrr}\toprule
Dataset & \multicolumn{3}{c}{Graph Properties} & \multicolumn{7}{c}{Hyperparameters}     \\ \hline
 &   \multicolumn{1}{l}{Avg. K-core}  & \multicolumn{1}{l}{Avg. Count. Walks} & \multicolumn{1}{l}{homophily}  & \multicolumn{1}{r}{$e_1$} &  \multicolumn{1}{r}{$e_2$}  &  \multicolumn{1}{r}{$e_3$} & \multicolumn{1}{r}{$m_1$}  &  \multicolumn{1}{r}{$m_2$}&\multicolumn{1}{r}{$m_3$} &  \multicolumn{1}{r}{$a$}  \\ \hline
Physics & $7.71$ & $449.22$ & $0.931 $&$ 0.28~(0.01)$&$ -0.31~(0.00)$& $-0.32~(0.00)$& $0.34~(0.01)$& $1.33~(0.01)$& $0.31~(0.01)$& $1.36~(0.01)$\\
Photo   & $16.97$ & $3204.098$& $0.827$ & $0.39~(0.06)$& $-0.26~(0.01) $& $-0.25~(0.01)$ & $0.59~(0.05)$& $1.51~(0.01)$& $0.53~(0.04)$& $1.70~(0.04)$\\
Cora   &$ 2.31$ & $42.52$  & $0.809$ & $0.31~(0.04)$& $0.02~(0.01)$& $-0.02~(0.01)$ & $0.67~(0.02)$& $1.43~(0.04)$& $0.66~(0.02)$& $0.69~(0.01)$\\
CS & $4.94$ & $162.75$ & $0.808$ & $0.33~(0.00)$& $-0.25~(0.00)$ & $-0.26~(0.00)$ & $0.44~(0.01)$& $1.44~(0.00)$& $0.40~(0.01)$& $1.47~(0.01)$\\
PubMed & $2.39$ & $75.43$  & $0.802$ & $0.28~(0.01)$& $-0.27~(0.00) $& $-0.28~(0.00)$ &$ 0.39~(0.00)$& $1.40~(0.01)$& $0.38~(0.00)$&$ 1.39~(0.01)$\\
Computers  & $18.84$ & $6221.39$ & $0.777$ & $0.40~(0.05)$&$ -0.74~(0.02)$ & $0.24~(0.03)$& $0.74~(0.05)$& $1.60~(0.05)$& $0.66~(0.04)$& $0.86~(0.10)$\\
CiteSeer & $1.73 $& $18.91 $ & $0.735$ & $0.35~(0.00)$& $-0.21~(0.01)$ & $-0.22~(0.01)$ & $ 0.49~(0.01)$& $1.49~(0.01)$& $0.47~(0.01)$& $1.50~(0.01)$\\
ogbn-arxiv  &$ 7.13$ & $4898.16$ & $0.654 $& $-0.08~(0.02)$ & $-0.29~(0.01)$ & $-0.41~(0.00) $& $0.13~(0.01)$& $1.31~(0.04)$&$ 0.13~(0.01)$& $1.00~(0.01)$\\
deezer-europe & $3.57$ &$ 106.16$ &$ 0.525$ & $0.31~(0.04)$& $-0.51~(0.03)$ & $-0.54~(0.02)$ & $0.59~(0.04)$& $-0.96~(0.04)$ & $1.55~(0.03)$& $-0.59~(0.03)$\\
Penn94  & $33.68 $& $10662.08$& $0.470$ & $0.51~(0.01)$& $-1.00~(0.02)$ & $-0.09~(0.02)$ & $0.98~(0.01)$& $1.01~(0.04)$&$ 0.82~(0.01)$& $0.95~(0.03)$\\
chameleon& $16.60$ & $2913.48$ & $0.231$ & $0.15~(0.04)$& $-0.06~(0.01)$ & $-0.06~(0.01) $& $-0.17~(0.03) $& $0.88~(0.02)$& $-0.16~(0.03)$ & $-0.15~(0.02)$\\
squirrel  & $41.55$ & $31888.02$& $0.222$ & $0.38~(0.05)$& $-0.26~(0.03)$ & $-0.24~(0.03)$ & $0.31~(0.80)$& $1.75~(0.07)$& $0.26~(0.70)$& $1.69~(0.56)$\\
arxiv-year & $7.13 $& $82.85 $ & $0.218$ & $-0.25~(0.01) $& $-0.27~(0.01)$ & $-0.40~(0.01)$ & $0.01~(0.01)$& $0.99~(0.01)$& $0.05~(0.01)$& $0.80~(0.01)$\\
Wisconsin& $2.05$ & $76.26 $ & $0.206 $& $0.95~(0.05)$& $-0.09~(0.04) $& $-0.05~(0.01)$ &$ 1.27~(0.25)$& $-0.94~(0.05)$ & $0.66~(0.07)$& $-0.64~(0.06)$\\
Cornell & $1.74 $& $58.47 $ & $0.132$ & $0.88~(0.05)$& $-0.17~(0.07)$ & $-0.07~(0.03) $& $1.04~(0.29)$& $-0.86~(0.11)$ & $0.80~(0.10)$& $-0.78~(0.08)$\\
Texas  & $1.71$ & $70.72 $ & $0.111$ & $0.93~(0.03)$& $-0.09~(0.04)$ & $-0.05~(0.01)$ &$ 1.17~(0.20)$& $-0.98~(0.05)$ & $0.65~(0.07)$& $-0.64~(0.07)$     \\ \hline      
\end{tabular}
}
\end{table}

\subsection{$k$-Core Centrality}
\begin{table}[h]
\centering
\caption{Graph Properties of the used datasets and the corresponding learned hyperparameters in GAGCN w/ K-Core}
\label{tab:kcore_hyper}
\resizebox{\textwidth}{!}{%
\begin{tabular}{l|rrr|rrrrrrr}\toprule
Dataset &  \multicolumn{3}{c}{Graph Properties}  & \multicolumn{7}{c}{Hyperparameters}      \\ \hline
 &   \multicolumn{1}{l}{Avg. K-core}  & \multicolumn{1}{l}{Avg. Count. Walks} & \multicolumn{1}{l}{homophily}  & \multicolumn{1}{r}{$e_1$} &  \multicolumn{1}{r}{$e_2$}  &  \multicolumn{1}{r}{$e_3$} & \multicolumn{1}{r}{$m_1$}  &  \multicolumn{1}{r}{$m_2$}&\multicolumn{1}{r}{$m_3$} &  \multicolumn{1}{r}{$a$} \\ \hline
Physics  & $ 7.71 $ & $ 449.22 $ & $ 0.931$ & $ 0.38~(0.03) $ & $ -0.35~(0.01) $ & $ -0.35~(0.01) $ & $ 0.34~(0.02) $ & $ 1.28~(0.01) $ & $ 0.30~(0.02) $ & $ 1.35~(0.02) $ \\
Photo  & $ 16.97$ & $ 3204.098 $ & $ 0.827$ & $ 0.52~(0.02) $ & $ -0.31~(0.01) $ & $ -0.31~(0.01) $ & $ 0.73~(0.03) $ & $ 1.44~(0.02) $ & $ 0.59~(0.03) $ & $ 1.77~(0.05) $ \\
Cora   & $ 2.31 $ & $ 42.52 $ & $ 0.809$ & $ 0.34~(0.01) $ & $ -0.74~(0.00) $ & $ 0.24~(0.01) $ & $ 0.60~(0.01) $ & $ 1.55~(0.01) $ & $ 0.59~(0.01) $ & $ 0.68~(0.01) $ \\
CS   & $ 4.94 $ & $ 162.75 $ & $ 0.808$ & $ 0.41~(0.01) $ & $ -0.29~(0.01) $ & $ -0.29~(0.01) $ & $ 0.43~(0.01) $ & $ 1.39~(0.01) $ & $ 0.39~(0.01) $ & $ 1.47~(0.01) $ \\
PubMed & $ 2.39 $ & $ 75.43 $ & $ 0.802$ & $ 0.27~(0.00) $ & $ -0.31~(0.00) $ & $ -0.32~(0.00) $ & $ 0.34~(0.01) $ & $ 1.34~(0.01) $ & $ 0.34~(0.01) $ & $ 1.35~(0.01) $ \\
Computers  & $ 18.84$ & $ 6221.39$ & $ 0.777$ & $ 0.51~(0.02) $ & $ -0.28~(0.01) $ & $ -0.29~(0.01) $ & $ 0.78~(0.03) $ & $ 1.50~(0.01) $ & $ 0.66~(0.03) $ & $ 1.72~(0.05) $ \\
CiteSeer  & $ 1.73 $ & $ 18.91 $ & $ 0.735$ & $ 0.39~(0.01) $ & $ -0.26~(0.00) $ & $ -0.26~(0.00) $ & $ 0.45~(0.01) $ & $ 1.43~(0.01) $ & $ 0.44~(0.01) $ & $ 1.46~(0.00) $ \\
ogbn-arxiv  & $ 7.13 $ & $ 4898.16$ & $ 0.654$ & $ 0.27~(0.01) $ & $ -0.53~(0.01) $ & $ -0.55~(0.01) $ & $ -0.70~(0.01) $ & $ -1.04~(0.03) $ & $ 0.31~(0.02) $ & $ 0.70~(0.02) $ \\

deezer-europe & $ 3.57 $ & $ 106.16 $ & $ 0.525$ & $ -0.01~(0.03) $ & $ -0.51~(0.00) $ & $ -0.51~(0.00) $ & $ 0.02~(0.01) $ & $ 0.99~(0.01) $ & $ 0.02~(0.01) $ & $ 1.07~(0.01) $ \\
Penn94 & $ 33.68$ & $ 10662.08 $ & $ 0.470$ & $ -0.09~(0.30) $ & $ -0.39~(0.08) $ & $ -0.40~(0.08) $ & $ 0.28~(0.36) $ & $ 1.27~(0.16) $ & $ 0.05~(0.41) $ & $ 1.60~(0.15) $ \\
chameleon  & $ 16.60$ & $ 2913.48$ & $ 0.231$ & $ 0.15~(0.04) $ & $ -0.06~(0.01) $ & $ -0.06~(0.01) $ & $ -0.17~(0.02) $ & $ 0.88~(0.01) $ & $ -0.16~(0.02) $ & $ -0.15~(0.01)$ \\
squirrel  & $ 41.55$ & $ 31888.02 $ & $ 0.222$ & $ 0.46~(0.02) $ & $ -0.78~(0.01) $ & $ 0.24~(0.01) $ & $ -0.97~(0.05) $ & $ 1.78~(0.04) $ & $ -0.97~(0.05) $ & $ -1.08~(0.12)$ \\
arxiv-year  & $ 7.13 $ & $ 82.85 $ & $ 0.218$ & $ 0.34~(0.05) $ & $ -0.36~(0.01) $ & $ -0.41~(0.01) $ & $ -0.13~(0.06) $ & $ 1.03~(0.01) $ & $ -0.03~(0.02) $ & $ 0.80~(0.02) $ \\
Wisconsin  & $ 2.05$ & $ 76.26 $ & $ 0.206$ & $ 1.20~(0.02) $ & $ -0.05~(0.02) $ & $ -0.06~(0.02) $ & $ 1.48~(0.03) $ & $ -0.96~(0.04) $ & $ 0.54~(0.02) $ & $ -0.54~(0.03)$ \\
Cornell   & $ 1.74 $ & $ 58.47 $ & $ 0.132$ & $ 0.34~(0.05) $ & $ -0.36~(0.01) $ & $ -0.41~(0.01) $ & $ -0.13~(0.06) $ & $ 1.03~(0.01) $ & $ -0.03~(0.02) $ & $ 0.80~(0.02) $ \\
Texas  & $ 1.71$ & $ 70.72 $ & $ 0.111$ & $ 0.50~(0.06)$  & $ -0.02~(0.02) $ & $ -0.04~(0.02) $ & $ -0.87~(0.05) $ & $ 1.04~(0.05) $ & $ -0.85~(0.05) $ & $ -0.86~(0.05)  $ \\ \hline      
\end{tabular}
}
\end{table}

\subsection{PageRank Centrality}
\begin{table}[h]
\centering
\caption{Graph Properties of the used datasets and the corresponding learned hyperparameters in GAGCN w/ PageRank}
\label{tab:pagerank_hyper}
\resizebox{\textwidth}{!}{%
\begin{tabular}{l|rrr|rrrrrrr}\toprule
Dataset & \multicolumn{3}{c}{Graph Properties} & \multicolumn{7}{c}{Hyperparameters}     \\ \hline
 &    \multicolumn{1}{l}{Avg. K-core}  & \multicolumn{1}{l}{Avg. Count. Walks} & \multicolumn{1}{l}{homophily}  & \multicolumn{1}{r}{$e_1$} &  \multicolumn{1}{r}{$e_2$}  &  \multicolumn{1}{r}{$e_3$} & \multicolumn{1}{r}{$m_1$}  &  \multicolumn{1}{r}{$m_2$}&\multicolumn{1}{r}{$m_3$} &  \multicolumn{1}{r}{$a$} \\ \hline
   Physics  & $7.71$ & $449.22$ & $0.931$ & $0.51~(0.00)$ & $0.00~(0.00)$ & $0.00~(0.00)$ & $1.31~(0.06)$ & $1.00~(0.08)$ & $0.34~(0.06)$ & $0.33~(0.07)$  \\
Photo  & $16.97$ & $3204.098$ & $0.827$ & $0.53~(0.02)$ & $0.08~(0.01)$ & $0.08~(0.01)$ & $0.88~(0.01)$ & $0.85~(0.01)$ & $-0.12~(0.01)$ & $-0.13~(0.01)$ \\
Cora  & $2.31$ & $42.52$ & $0.809$ & $0.00~(0.00)$ & $-0.71~(0.02)$ & $0.11~(0.01)$ & $0.63~(0.01)$ & $1.49~(0.02)$ & $0.63~(0.01)$ & $0.67~(0.01) $ \\
CS  & $4.94$ & $162.75$ & $0.808$ & $0.00~(0.00)$ & $-0.10~(0.01)$ & $-0.10~(0.01)$ & $0.46~(0.04)$ & $1.38~(0.10)$ & $0.46~(0.04)$ & $1.49~(0.03)$  \\
PubMed  & $2.39$ & $75.43$ & $0.802$ & $0.51~(0.00)$ & $0.00~(0.00)$ & $0.00~(0.00)$ & $1.34~(0.02)$ & $1.28~(0.02)$ & $0.36~(0.02)$ & $0.38~(0.02)$  \\
Computers & $18.84$ & $6221.39$ & $0.777$ & $0.00~(0.00)$ & $-0.42~(0.00)$ & $-0.42~(0.00)$ & $-0.13~(0.02)$ & $0.84~(0.01)$ & $-0.13~(0.02)$ & $0.87~(0.02)$  \\
CiteSeer& $1.73$ & $18.91$ & $0.735$ & $0.00~(0.00)$ & $-0.14~(0.00)$ &$ -0.12~(0.00)$ &$ 0.44~(0.01)$ &$ 1.41~(0.00)$ & $0.44~(0.01) $& $1.47~(0.01) $ \\
ogbn-arxiv & 7.13 & $4898.16$ & $0.654$ & $0.00~(0.00)$ & $-0.89~(0.01) $& 0.11~(0.01) & $-0.22~(0.01) $& $0.78~(0.01)$ & $-0.22~(0.01)$ & $-0.22~(0.01)$ \\
deezer-europe  & $3.57$ & $106.16$ & $0.525$ & $0.57~(0.05)$ & $0.05~(0.01)$ & $0.05~(0.01)$ & $0.90~(0.03)$ & $0.90~(0.02)$ & $-0.10~(0.03)$ & $-0.10~(0.02)$ \\
Penn94 & $33.68$ & $10662.08 $& $0.470$ & $0.54~(0.01) $& $0.05~(0.01)$ & $0.05~(0.01)$ & $1.10~(0.01)$ & $-0.90~(0.01)$ & $0.10~(0.01)$ & $-0.10~(0.01) $\\
chameleon  & $16.60 $ & $2913.48$ & $0.231$ & $-0.01~(0.00)$ & $-0.94~(0.00)$ &$ 0.06~(0.01)$ & $0.27~(0.05)$ & -$0.88~(0.02)$ & $1.26~(0.05)$ & $-0.25~(0.05)$ \\
squirrel &$ 41.55$ & $31888.02$ & $0.222$ & $0.00~(0.00) $& $-0.43~(0.01) $& $-0.43~(0.01)$ & $0.14~(0.01)$ & $-0.86~(0.01)$ & $1.14~(0.01)$ & $-0.14~(0.01)$ \\
arxiv-year  & $7.13$ & $82.85$ & $0.218 $& $0.00~(0.00)$ & $-0.84~(0.03)$ & $0.06~(0.01)$ & $-0.04~(0.02)$ & $0.86~(0.03) $& $-0.04~(0.02)$ &$ -0.05~(0.03)$ \\
Wisconsin  & $2.05$ & $76.26$ & $0.206$ & $0.64~(0.02)$ & $0.10~(0.01)$ & $0.10~(0.01)$ & $1.68~(0.03)$ & $-1.01~(0.04)$ & $0.69~(0.02)$ & $-0.69~(0.02) $\\
Cornell & 1.74 & 58.47 & 0.132 & $0.64~(0.03)$ & $0.11~(0.01)$ & $0.11~(0.01)$ & $1.71~(0.02)$ & $-1.03~(0.05)$ & $0.71~(0.01$) & $-0.72~(0.01)$ \\
Texas  &$ 1.71$ & $70.72$ & $0.111$ & $0.68~(0.03)$ & $0.14~(0.03)$ & $0.14~(0.03)$ & $1.63~(0.04)$ & $-0.93~(0.06)$ & $0.64~(0.04)$ & $-0.63~(0.04)$     
  \\ \hline      
\end{tabular}
}
\end{table}

\newpage
\subsection{Count of Walks Centrality}
\begin{table}[h]
\centering
\caption{Graph Properties of the used datasets and the corresponding learned hyperparameters in GAGCN w/ Count of walks.}
\label{tab:count_walks_hyper}
\resizebox{\textwidth}{!}{%
\begin{tabular}{l|rrr|rrrrrrr}\toprule
Dataset &  \multicolumn{3}{c}{Graph Properties} & \multicolumn{7}{c}{Hyperparameters}      \\ \hline
 &   \multicolumn{1}{l}{Avg. K-core}  & \multicolumn{1}{l}{Avg. Count. Walks} & \multicolumn{1}{l}{homophily}  & \multicolumn{1}{r}{$e_1$} &  \multicolumn{1}{r}{$e_2$}  &  \multicolumn{1}{r}{$e_3$} & \multicolumn{1}{r}{$m_1$}  &  \multicolumn{1}{r}{$m_2$}&\multicolumn{1}{r}{$m_3$} &  \multicolumn{1}{r}{$a$} \\ \hline
Physics    & $ 7.71 $ & $ 449.22 $ & $ 0.931 $ & $ 0.95~(0.01) $ & $ -0.02~(0.02) $ & $ -0.02~(0.02) $ & $ 0.89~(0.02) $ & $ 0.96~(0.03) $ & $ -0.10~(0.01) $ & $ -0.09~(0.01)$ \\
Photo  & $ 16.97 $ & $ 3204.098 $ & $ 0.827 $ & $ -0.06~(0.01) $ & $ -0.07~(0.00) $ & $ -0.07~(0.00) $ & $ -0.05~(0.01) $ & $ 0.87~(0.00) $ & $ -0.04~(0.02) $ & $ -0.09~(0.01)$ \\
Cora & $ 2.31 $ & $ 42.52  $ & $ 0.809 $ & $ 0.36~(0.02) $ & $ 0.03~(0.01) $ & $ 0.02~(0.01) $ & $ 0.68~(0.02) $ & $ 1.37~(0.09) $ & $ 0.63~(0.02) $ & $ 0.64~(0.01)$ \\
CS  & $ 4.94 $ & $ 162.75 $ & $ 0.808 $ & $ 0.45~(0.02) $ & $ 0.03~(0.01) $ & $ 0.02~(0.01) $ & $ 0.62~(0.03) $ & $ 1.23~(0.04) $ & $ 0.47~(0.02) $ & $ 0.52~(0.02)$ \\
PubMed & $ 2.39 $ & $ 75.43  $ & $ 0.802 $ & $ 0.30~(0.02) $ & $ -0.15~(0.02) $ & $ -0.16~(0.02) $ & $ 0.60~(0.02) $ & $ 1.58~(0.03) $ & $ 0.56~(0.02) $ & $ 1.38~(0.04)$ \\
Computers & $ 18.84 $ & $ 6221.39 $ & $ 0.777 $ & $ -0.05~(0.01) $ & $ -0.07~(0.00) $ & $ -0.07~(0.00) $ & $ -0.05~(0.02) $ & $ 0.86~(0.01) $ & $ -0.04~(0.03) $ & $ -0.10~(0.02)$ \\
CiteSeer  & $ 1.73 $ & $ 18.91  $ & $ 0.735 $ & $ 0.25~(0.01) $ & $ -0.13~(0.01) $ & $ -0.14~(0.01) $ & $ 0.67~(0.01) $ & $ 1.63~(0.01) $ & $ 0.62~(0.01) $ & $ 1.63~(0.01)$ \\
ogbn-arxiv & $ 7.13 $ & $ 4898.16 $ & $ 0.654 $ & $ 0.12~(0.00) $ & $ -0.08~(0.00) $ & $ -0.19~(0.00) $ & $ 0.32~(0.00) $ & $ 1.57~(0.02) $ & $ 0.32~(0.00) $ & $ 0.93~(0.01)$ \\
deezer-europe & $ 3.57 $ & $ 106.16 $ & $ 0.525 $ & $ 0.26~(0.03) $ & $ -1.04~(0.03) $ & $ -0.07~(0.03) $ & $ 0.58~(0.04) $ & $ 0.88~(0.06) $ & $ 0.60~(0.04) $ & $ 0.67~(0.06)$ \\
Penn94   & $33.68$& $ 10662.08 $ & $ 0.470 $ & $ 0.30~(0.00) $ & $ -0.77~(0.03) $ & $ 0.06~(0.09) $ & $ 0.58~(0.00) $ & $ -0.46~(0.16) $ & $ 1.39~(0.01) $ & $ -0.31~(0.17)$ \\
chameleon  & $ 16.60 $ & $ 2913.48 $ & $ 0.231 $ & $ 0.32~(0.06) $ & $ -0.05~(0.01) $ & $ -0.05~(0.01) $ & $ -0.28~(0.08) $ & $ 0.89~(0.02) $ & $ -0.17~(0.03) $ & $ -0.14~(0.02)$ \\
squirrel  & $ 41.55 $ & $ 31888.02 $ & $ 0.222 $ & $ 0.23~(0.11) $ & $ -0.71~(0.10) $ & $ 0.35~(0.10) $ & $ -0.46~(0.46) $ & $ 1.28~(0.22) $ & $ -0.32~(0.33) $ & $ -0.29~(0.48)$ \\
arxiv-year& $ 7.13 $ & $ 82.85  $ & $ 0.218 $ & $ 0.23~(0.01) $ & $ -0.28~(0.01) $ & $ -0.16~(0.01) $ & $ -0.26~(0.01) $ & $ 0.99~(0.03) $ & $ -0.01~(0.04) $ & $ 0.84~(0.03)$ \\
Wisconsin & $ 2.05 $ & $ 76.26  $ & $ 0.206 $ & $ 0.40~(0.01) $ & $ -0.79~(0.07) $ & $ 0.18~(0.05) $ & $ 0.61~(0.01) $ & $ -1.38~(0.08) $ & $ 1.48~(0.01) $ & $ -0.69~(0.06)$ \\
Cornell  & $ 1.74 $ & $ 58.47  $ & $ 0.132 $ & $ 0.40~(0.01) $ & $ -0.21~(0.04) $ & $ -0.22~(0.04) $ & $ 0.59~(0.01) $ & $ -1.51~(0.06) $ & $ 1.47~(0.01) $ & $ -0.61~(0.05)$ \\
Texas  & $ 1.71 $ & $ 70.72  $ & $ 0.111 $ & $ 0.40~(0.01) $ & $ -0.72~(0.03) $ & $ 0.24~(0.03) $ & $ 0.58~(0.01) $ & $ -1.48~(0.05) $ & $ 1.47~(0.01) $ & $ -0.59~(0.03)
  $ \\ \hline      
\end{tabular}
} 
\end{table}

\end{document}